\def \1{{\bf 1}}
\theoremstyle{definition}
\def \r{\textcolor{red} }
\newtheorem{theorem}{Theorem}[section]
\newtheorem{corollary}[theorem]{Corollary}
\newtheorem{lemma}[theorem]{Lemma}
\newtheorem{example}[theorem]{Example}
\newtheorem{definition}[theorem]{Definition}
\newtheorem{remark}[theorem]{Remark}
\newtheorem{Proof}[theorem]{Proof}
\newcommand{\N}{\mathbb{N}}
\newcommand{\Q}{\mathbb{Q}}
\newcommand{\R}{\mathbb{R}}
\newcommand{\Prob}{\mathbb{P}} 
\DeclareMathOperator{\ind}{\mathds{1}} 
\DeclareMathOperator{\E}{\mathbb{E}} 
\newcommand{\SNE}{{\rm SNE}}
\newcommand{\Int}{{\rm int}}
\newcommand{\eq}{p^{*}} 
\newcommand\restr[2]{{
  \left.\kern-\nulldelimiterspace 
  #1 
  \right|_{#2} 
  }}
\newtheorem{exa}[theorem]{Example}
\def \exp{ {\rm exp} }
\def \ve{ {\varepsilon} }
\def \1{\mathbf{1}}
  \def \P{\mathbb{P}}
\def\abstract{\begin{center} \small\bf Abstract\end{center}\small}
 \title{Games played by Exponential  Weights Algorithms}
\author{
 Maurizio D'Andrea,   Fabien Gensbittel, J\'er\^{o}me Renault  
}
\date{\today}
\begin{document}
\maketitle
 
\begin{abstract} 
%

This paper studies the last-iterate convergence properties of the exponential weights algorithm with constant learning rates.  We consider a repeated interaction in discrete time, where each player uses an exponential weights algorithm characterized by an initial mixed action and a fixed learning rate, so that the mixed action profile $p^t$ played at stage $t$ follows an homogeneous Markov chain. At first, we show that whenever a strict Nash equilibrium exists, the probability to play a strict Nash equilibrium at the next stage  converges almost surely to 0 or 1. Secondly, we show that the limit of $p^t$, whenever it exists, belongs to the set of ``Nash Equilibria with Equalizing Payoffs''. Thirdly, we show that in strong coordination games, where the payoff of a player is positive on the diagonal and 0 elsewhere, $p^t$ converges almost surely  to one of the strict Nash equilibria. We conclude with   open questions.
\end{abstract}
 
\vspace{.2cm} 
\noindent{\bf  Keywords}: Repeated Games, Exponential Weights Algorithms, Nash Equilibrium with Equalizing Payoffs.  
 
\section{Introduction}

Many machine learning algorithms used for prediction or decision-making are designed to optimize the behavior of a  single agent facing an unknown environment. However,  with the increasing use of these algorithms in various fields and the complexity of the problems at hand, interaction between these algorithms, designed for an  agent unconscious of other players,  have become common. This raises a natural question: where will these interactions lead?

Our paper contributes to the large literature  on learning algorithms in games. Precisely, we analyze the day to day behavior of the exponential weights (EW) algorithm with constant learning rates, when applied independently by all the players in a finite game. 

The EW algorithm (\cite{NR0,NR2,NR3,NR4}) is one of the most popular and widely studied algorithm with applications in various domains and contexts: computational geometry, optimization, operations research, online statistical decision-making, machine learning (we refer to \cite{CBL,Shai,Regret} for a general account on the subject). The idea behind EW is the following: at each stage $t\geq 0$, each player $i$ chooses an action at random according to their mixed strategy $p_i^t$ and then observe their random vector payoff (one coordinate for each possible action of that player)  which depends on the realized actions of the other players. The probability to play each action at stage $t+1$ for player $i$ is proportional to the exponential of the sum of the payoffs he would have obtained by playing that same action at all past stages multiplied by a learning rate $\eta_i$.  

A commonly used performance criterion is regret minimization, first introduced by Hannan \cite{Hannan}. 
A no-regret strategy ensures that, looking at the history of the strategies played, there is no single action that, if consistently chosen, would have provided a better payoff. EW with appropriate time-dependent and decreasing learning rates is a no regret strategy (see e.g. Corollary 4.3 page 73 in \cite{CBL}). A classical result by Hart and Mas-Colell \cite{HartMasColell} states that if every player follows a no-regret strategy, the empirical average distribution almost surely converges to the set of coarse correlated equilibrium distributions. However, this result holds only for time-average and the set of coarse correlated equilibria can include strategies that are far from rational, making the final prediction of empirical convergence to these equilibria rather mild.

Another possible approach to EW with decreasing learning rates, which applies to a large family of algorithms, is known as the stochastic approximation method and originates from the work of Benaïm \cite{Ben}. It was proven in  \cite{LerningPotentialgames} that EW with appropriate decreasing learning rates (as well as other variants of this algorithm) almost surely converges to Nash equilibria in potential games.  

Closer to our work, a variant of EW with constant learning rates but with added noise was considered in \cite{WSNE}. The algorithm was analyzed through a comparison with the solutions of a continuous-time dynamical system, as in the stochastic approximation method, and it was shown that the players play most of the time an approximate equilibrium.  

Another stream of the literature analyzed the mean-dynamic associated to EW (whereas the algorithm we analyze is sometimes referred to as stochastic EW). This amounts to consider that each player observes the expectation of the vector payoff, which leads to a deterministic discrete-time evolution. In congestion games, it was shown in \cite{palaiopanos2017multiplicative} that the constant learning rate version of this algorithm may lead to cycles and chaotic behavior, a phenomenon which was already analyzed for continuous-time dynamics in zero-sum games in \cite{mertik2018cycles}.

To our knowledge, although there is a large literature on the EW algorithm, its variants and generalizations, convergence of the actual sequence of strategies generated by the EW algorithm with constant learning rates remains essentially an open problem, that we propose to address. 
In such an analysis, techniques used for the decreasing learning rates case are no longer applicable, while convergence properties obtained for the last iterates are stronger than convergence of time-averages. As we have in mind agents interacting  without being conscious of the game they are playing, it makes no sense here to assume that the agents agree on time 0 (the initial time for one player may not coincide with that for another player)  nor that they have the same learning rate or the same uniform initialization. A reason to focus on constant, non necessarily equal,  learning rates, is that in practice agents often start with a fixed  learning rate (that they may modify later in case of disappointing results). Another reason  is simplicity of the model, the EW algorithm with constant learning rate being used as a basic brick in many  algorithms. 

We focus here on the long-term properties  of the   Markov chain induced by the EW algorithms on the mixed action profiles.  First, we prove that  the probability to play at the next stage a strict Nash equilibrium  converges almost surely to 0 or 1. Secondly, we prove that the limit of the mixed strategy profile, whenever it exists, belongs to the set of ``Nash Equilibria with Equalizing Payoffs''. Thirdly, we show that in strong coordination games, where the payoff of a player is positive on the diagonal and 0 elsewhere, the sequence of mixed strategy profiles converges almost surely  to one of the strict Nash equilibria. 
We think that these results may pave the way for the convergence analysis in other classes of games and for more general algorithms.


The outline of the paper is as follows:
In Section \ref{sec:model}, we introduce the exponential weights algorithm with constant learning rates, and the homogeneous Markov chain on the space of mixed strategy profiles it induces. 
We illustrate with a simple example that this Markov chain does not always converge almost surely. 
In Section \ref{sec:strictNE}, we study games with strict Nash equilibria and prove that the probability to play a given strict Nash equilibrium at every stage is positive and a continuous function of the starting distribution. 
Furthermore, we show the equivalence between the convergence of the induced dynamics to a strict Nash equilibrium and the stronger statement that eventually players will always play this strict Nash equilibrium. 
These results imply that the strategies induced by EW converge almost surely to the boundary of the set of mixed strategy profiles, as soon as a strict Nash equilibrium exists.
In particular, we characterize which part of the boundary is reachable by the dynamics, and as a corollary, we show that the probability of playing a strict Nash equilibrium at stage $t$ almost surely converges to 0 or 1.
In Section \ref{sec:neep}, we consider Nash equilibrium with equalizing payoffs (NEEP). 
It is well known that two actions in the support of a Nash equilibrium have the same expected payoff under it; in a NEEP, this equality between payoffs holds almost surely. 
We then prove that the existence of a NEEP is a necessary condition for convergence in EW with constant learning rate, i.e., the probability that the induced dynamics converge to a mixed strategy profile that is not a NEEP is equal to zero. 
This implies that in games like matching pennies, there is no hope for convergence with EW. 
In Section \ref{sec:coord}, we consider the class of strong coordination games, where all the players have the same action set, and the payoff  of each  player is positive whenever all the players play the same action, and 0 otherwise. 
We prove that the Markov chain induced by EW almost surely converge to one of the strict Nash equilibria.
Finally, we conclude with some simulations in Section \ref{sec:simulations} and  open questions in Section \ref{sec:remarks}.


\section{Model}
\label{sec:model}

Throughout the paper we fix a finite normal form game $G=(N, (A_i)_{i\in N}, (u_i)_{i\in N})$, where $N=\{1,...,n\}$ is the set of  players. For each  $i$ in $N$,    $A_i$ is the finite set of actions of player $i$ and $u_i:  \prod_{j=1}^n A_j \to \R$ is the payoff function of player $i$. \\

\noindent{\bf Notations:}     $A=\prod_{j=1}^n A_j$ is the set  of actions profiles  of all players  and for $i$ in $N$,   $A_{-i}$ is  the set $\prod_{j\in N \backslash \{i\}} A_j$ of action profiles of all  players different  from   $i$. As usual, we write  $\Delta(A_i)$ to represent the set of mixed actions of player $i$ (probability distributions over $A_i$).  $\Delta=\prod_{i \in N} \Delta(A_i)$ is the set of mixed action profiles and the payoff functions are extended linearly to $\Delta$. We denote by $\Int(\Delta)=\{p \in \Delta, \forall i \in N \; \forall a_i \in A_i,   p_i(a_i)>0\}$ the relative interior of $\Delta$. {We identify $p \in \Delta$ with the product probability over $\prod_{i \in N} A_i$ defined by $p(a)=\prod_{i \in N}p_i(a_i)$. We also identify each pure action $a_i \in A_i$ with the Dirac mass at $a_i$.}  \\


The game $G$ will be played at stages $t=0,1,...$, and we assume that each player $i$ uses an Exponential Weights Algorithm characterized by a learning rate $\eta_i>0$ and positive  initial weights $(w_i^0(a_i))_{a_i\in A_i}$. This algorithm unfolds  as follows:  
\begin{itemize}
\item each action $a_i$ will have a positive  weight $w_i^t(a_i)$ for each stage $t$,
\item at  each   stage $t$, player $i$ plays proportionally to the  current weights, i.e. plays every action $a_i$ in $A_i$ with probability $\frac{w^{t}_i(a_i)}{w_i^t}$, where  $w_i^t:=\sum_{a_i\in A_i} w_i^t(a_i)$ is the current sum of weights,
 \item at the end of each stage $t$, player $i$ observes the action profile $a_{-i}^t$ in $A_{-i}$  played by the other players at $t$, and updates its weights by:
\[\forall a_i \in A_i,\; w_i^{t+1}(a_i)= w_i^t(a_i) \exp(\eta_i u_i(a_i, a_{-i}^t)).\]
\end{itemize}
Notice that all weights remain positive, so that every pure action is played with positive probability at every stage. 

We write $p_i^t(a_i)= \frac{w^{t}_i(a_i)}{w_i^t}$, so that $p_i^t= (p_i^t(a_i))_{a_i\in A_i}\in \Delta(A_i)$ is the mixed action played by player $i$ at stage $t$, and $p^t=(p_i^t)_{i\in N}\in \Delta$ is the mixed action profile played at stage $t$. 
As soon as $t\geq 1$, $p^t$ depends on the realized  actions   played by the other players at previous stages, so $(p^t)_t$ is not a deterministic sequence but a stochastic process.  For each $t\geq 0$ we denote by ${\cal F}_t$ the $\sigma$-algebra generated by the first $t$ actions $a^0,..., a^{t-1}$ in $A$. Then $p^t$ is ${\cal F}_t$-measurable for every $t$. We denote by ${\cal F}_\infty$
the $\sigma$-algebra generated by all  the  ${\cal F}_t$'s, then the sequence $(p^t)_t$ is ${\cal F}_\infty$-measurable.

Simple computations show that for each player $i$, stage $t$ and action $a_i$ in $A_i$:
 $$w^t_{i}(a_i)=w^0_{i}(a_i) e^{\eta_i\sum_{k=0}^{t-1}u_i(a_i,a_{-i}^k)},$$

\begin{equation}\label{eq1}p^t_{i}(a_i)  =\frac{w^0_{i}(a_i) e^{\eta_i\sum\limits_{k=0}^{t-1}u_i(a_i,a_{-i}^k)}}{\sum\limits_{b_i\in A_i}w^0_{i}(b_i) e^{\eta_i\sum\limits_{k=0}^{t-1}u_i(b_i,a_{-i}^k)}}=\frac{p^0_{i}(a_i)}{\sum\limits_{b_i\in A_i} p^0_{i}(b_i)e^{\eta_i \sum\limits_{k=0}^{t-1}(u_i(b_i,a^k_{-i})-u_i(a_i,a^k_{-i}))}},  
\end{equation}

\begin{equation}\label{eq2}p^t_{i}(a_i) =\frac{w^{t-1}_{i}(a_i)e^{\eta_i u_i(a_i,a_{-i}^{t-1})}}{\sum\limits_{b_i\in A_i}w^{t-1}_{i}(b_i)e^{\eta_i u_i(b_i,a_{-i}^{t-1})}} = \frac{p^{t-1}_{i}(a_i)}{\sum\limits_{b_i\in A_i} p^{t-1}_{i}(b_i)e^{\eta_i (u_i(b_i,a^{t-1}_{-i})-u_i(a_i,a^{t-1}_{-i}))}}.
\end{equation}

We have the fundamental property:\\

\noindent {\bf{Claim:}}   $(p^t)_{t\geq 0}$ is  an homogeneous Markov chain taking values in $\Delta$.

With our assumptions, $p^0$ belongs to the relative interior  of $\Delta$. However for convenience we will sometimes also consider initial mixed profiles $p^0$  in the boundary of $\Delta$. For $p^0$ in $\Delta$, we denote by $\P_{p^0}$ (or sometimes simply $\P$)  the induced law on the sequences $(a^t)_t$ and $(p^t)_t$. 
Notice that if $p^0_i(a_i)=0$ for some player $i$ and action $a_i$ in $A_i$, then $\P_{p^0}$-almost surely $p^t_i(a_i)=0$ for all $t$.

The almost sure convergence of $(p^t)_t$ is not true without additional assumptions. We conclude this section with  a specific  example where this convergence fails.  \begin{example} \label{ex1111}
Consider  the following bimatrix game:
\[
    \bordermatrix{
     & L & R  \cr
    T & (1,1) & (1, 0) \cr
    B & (1,0) & (1,1)
    }.\]
Notice that Player 1's payoff is always 1.  Here the Nash equilibria are the elements of $E_1 \cup E_2 \cup E_3$, with $E_1=\{(xT+(1-x)B,R), x\in [0,1/2]\}$, $E_2=\{(xT+(1-x)B,L), x\in [1/2,1]\}$ and $E_3=\{(1/2 T+1/2 B,  yL+(1-y)R), y\in [0,1]\}$.  
  
    Assume that $p^0$ is the uniform distribution. Since Player 1's payoff is constantly  1, we have $p_1^t(T)=p_1^t(B)=1/2$ for every $t\geq 0$. Moreover
 $$p_2^t(L)=\frac{1}{1+e^{\eta_2 Z_t}},$$
    where $Z_t=\sum_{k=0}^{t-1}(\1_{a_1^t=B} -\1_{a_1^t=T})$  is a symmetric random walk in the set  $\mathbb{Z}$ of integers. By Polya's recurrence theorem $(Z_t)_t$  is a recurrent Markov chain, so $(Z_t)_t$ has no limit    in $\mathbb{Z} \cup\{-\infty, +\infty\}$,  and consequently $(p_2^t)_t$ does not converge. \hfill $\Box$
\end{example}

We end this section with a lemma. If  $B\subset A$ is a subset of pure strategy profiles and $p\in \Delta$ is a mixed strategy profile, we write $p(B)=\sum_{b=(b_i)_{i\in N}\in B} \prod_{i \in N} p_i(b_i)$ for the probability under $p$ to play an element of $B$. 

\begin{lemma} \label{LevyExt} 
    Let $B\subset A$ be a subset of pure strategy profiles. For any starting distribution  $p^0$ in $\Delta$, we have $\P_{p^0}$-almost surely:
\begin{equation}\label{eqLevyExt}
\frac1{t}  \sum_{k=0}^{t-1} \left( \1_{a^k\in B}-p^k(B)\right) \xrightarrow[t \to \infty]{}0.
\end{equation}
\end{lemma}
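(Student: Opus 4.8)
The plan is to recognize the sum in \eqref{eqLevyExt} as a normalized martingale and to apply the strong law of large numbers for martingales (which is presumably why the lemma is named after Lévy's extension of Borel--Cantelli). The starting point is the observation that, conditionally on ${\cal F}_k$, the players draw their actions independently according to the ${\cal F}_k$-measurable mixed profile $p^k$, so that the realized profile $a^k$ has law $p^k$. With the product form of $p(B)$ fixed in the notation, this gives
$$\P_{p^0}(a^k\in B\mid {\cal F}_k)=\sum_{b\in B}\prod_{i\in N}p_i^k(b_i)=p^k(B).$$
Setting $X_k:=\1_{a^k\in B}-p^k(B)$, this identity says precisely that $\E[X_k\mid {\cal F}_k]=0$; moreover $X_k$ is ${\cal F}_{k+1}$-measurable and $|X_k|\le 1$. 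Thus $M_t:=\sum_{k=0}^{t-1}X_k$ is a martingale with respect to $({\cal F}_t)_t$ with bounded increments, and \eqref{eqLevyExt} is exactly the assertion that $M_t/t\to 0$ almost surely.

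Next I would deduce this convergence from the $L^2$ martingale convergence theorem together with Kronecker's lemma. Consider the martingale $N_t:=\sum_{k=0}^{t-1}\frac{X_k}{k+1}$, whose increments $\frac{X_k}{k+1}$ are again martingale differences. These increments are orthogonal in $L^2$ and their variances are summable,
$$\sum_{k\ge 0}\E\!\left[\Big(\tfrac{X_k}{k+1}\Big)^2\right]\le \sum_{k\ge 0}\frac{1}{(k+1)^2}<\infty,$$
so $N_t$ is bounded in $L^2$ and therefore converges almost surely to a finite limit. Kronecker's lemma applied with the weights $b_k=k+1\uparrow\infty$ then yields $\frac1{t}\sum_{k=0}^{t-1}X_k\to 0$ almost surely, which is \eqref{eqLevyExt}.

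As for difficulty, there is no deep obstacle here. The only points requiring a little care are (i) justifying the conditional-probability identity $p^k(B)=\P_{p^0}(a^k\in B\mid{\cal F}_k)$, which relies on the independence of the players' randomizations given the past and on the product expression $p(B)=\sum_{b\in B}\prod_i p_i(b_i)$, and (ii) confirming the summability of the variances, which is immediate from the uniform bound $|X_k|\le 1$. The statement is claimed for an arbitrary starting distribution $p^0\in\Delta$, including points of the boundary; nothing in the argument uses $p^0\in\Int(\Delta)$, since both the martingale structure and the bound $|X_k|\le 1$ hold regardless of where $p^0$ sits.
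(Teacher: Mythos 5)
Your proof is correct, and it takes a genuinely different route from the paper's. You treat $X_k=\1_{a^k\in B}-p^k(B)$ as a bounded martingale-difference sequence with respect to $({\cal F}_t)_t$ (the conditional identity $\P_{p^0}(a^k\in B\mid {\cal F}_k)=p^k(B)$ is indeed the only point needing care, and your justification via the conditional independence of the players' draws is the right one), and then you run the standard martingale strong law: the weighted martingale $N_t=\sum_{k=0}^{t-1}X_k/(k+1)$ has orthogonal increments with summable variances, hence is $L^2$-bounded and converges almost surely, and Kronecker's lemma turns this into $\frac1t\sum_{k=0}^{t-1}X_k\to 0$. The paper argues differently: it applies L\'evy's extension of the Borel--Cantelli lemma (Theorem 12.15 in Williams) to $Z_t=\sum_{k=0}^{t-1}\1_{a^k\in B}$ and $Y_t=\sum_{k=0}^{t-1}p^k(B)$, obtaining the almost sure dichotomy that either $Y_\infty<\infty$ and $Z_\infty<\infty$ (so both Ces\`aro averages vanish), or $Y_\infty=Z_\infty=\infty$ and $Z_t/Y_t\to 1$ (so the two bounded averages have vanishing difference). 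Both arguments rest on the same underlying martingale theory---L\'evy's extension is itself proved by a martingale argument---but yours is more direct and self-contained, needing only the $L^2$ convergence theorem and Kronecker's lemma to land exactly on the stated Ces\`aro limit; the paper's route buys, as a by-product, the sharper qualitative dichotomy (the number of visits to $B$ is finite if and only if the sum of conditional probabilities is finite, with ratio convergence in the infinite case), which is conceptually aligned with how the lemma is later invoked, e.g.\ to assert that certain action profiles are played infinitely often.
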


\begin{proof}
For $t=0,1,2,...,+\infty$, define the random variables: 
$$Z_t=\sum_{k=0}^{t-1} \1_{a^k\in B} \quad {\rm and}\quad Y_t=\sum_{k=0}^{t-1} p^k(B),$$
By  Levy's extension of Borel-Cantelli lemma  (see, for instance, Theorem 12.15 p. 124  in \cite{Williams}), we have almost surely: 
 
a)  if $Y_\infty <\infty$, then $Z_\infty<\infty$

 b)  if $Y_\infty=\infty$, then $Z_\infty=\infty$  and $\frac{Z_t}{Y_t}\xrightarrow[t \to \infty ]{}1.$

 In case a), both $\frac{1}{t} Y_t$ and $\frac{1}{t} Z_t$ converge to 0, and the result is clear. In case b), $(\frac{1}{t} Y_t)_t$ and $(\frac{1}{t} Z_t)_t$ are bounded sequences such that $\frac{\frac{1}{t} Z_t}{\frac{1}{t} Y_t}$ converges to 1, so $\frac{1}{t} Z_t-\frac{1}{t} Y_t \xrightarrow[t \to \infty]{}0.$
\end{proof}

\section{Strict Nash Equilibria}
\label{sec:strictNE}
Recall that a strict  Nash equilibrium   of $G$ is a (necessarily pure)  action profile $a$ in $A$ such that:  for each $i$ in $N$ and $b_i\neq a_i$ in $A_i$, $u_i(b_i,a_{-i})<u_i(a)$.

\begin{lemma}\label{lem1} Let $a$ be a strict Nash equilibrium  of $G$, and $p^0$ in $\Delta$.

a)  If   $p^0(a)>0$   then $\mathbb{P}_{p^0}(a^t=a  \quad \forall t\geq 0)>0$, i.e. the probability to play $a$  at each stage is   positive.

b) The map $(p^0\mapsto  \mathbb{P}_{p^0}(a^t=a \quad \forall t\geq 0))$ is continuous from $\Delta$ to $\R$. 

c) $\mathbb{P}_{p^0}$-almost surely, $$\inf_{t\geq 0} \|p^t-a\|=0 \; \Longleftrightarrow(p^t\xrightarrow[t \to \infty]{}\  a)   \Longleftrightarrow \exists t_0 \; \forall t\geq t_0 \; a^t=a.$$

\end{lemma}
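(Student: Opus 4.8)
The plan is to prove the two easy implications directly and to reduce the whole statement to showing that, almost surely, a trajectory that approaches $a$ along a subsequence is eventually absorbed at $a$. Write $T=\{\exists\,t_0,\ a^t=a\ \forall t\ge t_0\}$ for this ``absorption'' event, i.e.\ the rightmost condition in (c), so that (c) amounts to the almost sure identity $\{\inf_t\|p^t-a\|=0\}=\{p^t\to a\}=T$. The implication $T\Rightarrow(p^t\to a)$ is a direct computation: if $a^s=a$ for all $s\ge t_0$, then the analogue of \eqref{eq1} started at stage $t_0$ produces, for each $b_i\neq a_i$, an exponent $\eta_i(t-t_0)\big(u_i(b_i,a_{-i})-u_i(a_i,a_{-i})\big)\to-\infty$ since $a$ is a strict Nash equilibrium; hence $p^t_i(a_i)\to 1$ for every $i$ and $p^t\to a$. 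The implication $(p^t\to a)\Rightarrow\inf_t\|p^t-a\|=0$ is trivial. It thus remains to prove that $\P_{p^0}$-almost surely $\{\inf_t\|p^t-a\|=0\}\subseteq T$.

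The key ingredient is a uniform lower bound on the absorption probability near $a$. Set $g(p)=\P_p(a^t=a\ \forall t\ge 0)$. If $p^0=a$, every player puts all weight on $a_i$, so $a^0=a$ surely and, by the update rule, $p^1=a$; inductively $a^t=a$ for all $t$, whence $g(a)=1$. By part (b) the map $g$ is continuous on $\Delta$, so there are a neighbourhood $U$ of $a$ and a constant $c>0$ with $g(p)\ge c$ for all $p\in U$; shrinking $U$, we may also assume $U\subseteq\{p:\|p-a\|<\varepsilon\}$ for some $\varepsilon>0$. Using the (homogeneous) Markov property at the deterministic time $t$ we have $\P(a^s=a\ \forall s\ge t\mid\mathcal F_t)=g(p^t)$, and since $\{a^s=a\ \forall s\ge t\}\subseteq T$ we obtain
\begin{equation}\label{eq:lb}
\P(T\mid\mathcal F_t)\ \ge\ g(p^t)\ \ge\ c\,\1_{\{p^t\in U\}}\qquad\text{for every }t.
\end{equation}

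Now I would invoke L\'evy's $0$--$1$ law: as $T\in\mathcal F_\infty$, the martingale $\big(\P(T\mid\mathcal F_t)\big)_t$ converges $\P_{p^0}$-almost surely to $\1_T$. On the event $\{p^t\in U\text{ infinitely often}\}$, \eqref{eq:lb} forces $\limsup_t\P(T\mid\mathcal F_t)\ge c>0$; combined with the a.s.\ convergence this yields $\1_T\ge c>0$, hence $\1_T=1$, i.e.\ $T$ occurs. Therefore $\{p^t\in U\ \text{i.o.}\}\subseteq T$ almost surely. It remains to check $\{\inf_t\|p^t-a\|=0\}\subseteq\{p^t\in U\ \text{i.o.}\}$ outside a trivial case: if $p^0=a$ then $T$ holds surely; otherwise some player gives positive weight off $a_i$, so by \eqref{eq1} one has $p^t_i(a_i)<1$ and hence $p^t\neq a$ for every finite $t$. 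Were such a trajectory to visit $U$ only finitely often, then $\|p^t-a\|\ge\varepsilon$ for all large $t$ while the finitely many earlier values are all strictly positive, giving $\inf_t\|p^t-a\|>0$; contrapositively $\inf_t\|p^t-a\|=0$ forces infinitely many visits to $U$, whence $T$. This establishes $\{\inf_t\|p^t-a\|=0\}\subseteq T$ and closes the chain of equivalences.

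The step I expect to be the main obstacle is precisely this last ``$\inf=0\Rightarrow T$'' direction: one must upgrade merely \emph{approaching} $a$ along a subsequence into genuine absorption. The clean way is the combination of the uniform absorption bound \eqref{eq:lb} (which rests on the continuity in part (b) together with $g(a)=1$) and L\'evy's $0$--$1$ law. A more hands-on alternative is to introduce the successive entrance times of $p^t$ into $U$ and apply the strong Markov property with the conditional Borel--Cantelli lemma (Lemma \ref{LevyExt}) to the events ``absorption occurs at the $k$-th visit''; I find the $0$--$1$ law formulation preferable since it avoids the bookkeeping of exits and re-entries.
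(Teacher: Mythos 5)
Your proposal as written proves only part (c) of the lemma: parts (a) and (b) are never established, and (b) is precisely the ingredient your argument leans on. You invoke ``by part (b) the map $g$ is continuous on $\Delta$'' to get the uniform lower bound $g\ge c>0$ on a neighbourhood of $a$, but in the paper this continuity is where most of the work lies: one writes $\mathbb{P}_{p^0}(a^t=a\ \forall t\ge 0)$ as an explicit infinite product via \eqref{eq1}, proves positivity when $p^0(a)>0$ by comparing $\sum_{t}\sum_i\log\bigl(1+\sum_{b_i\neq a_i}\tfrac{p_i^0(b_i)}{p_i^0(a_i)}e^{\eta_i t(u_i(b_i,a_{-i})-u_i(a))}\bigr)$ with a convergent geometric series (this is part (a), and uses strictness of the equilibrium), proves continuity on the sets $\{p_i^0(a_i)\ge\varepsilon\}$ by uniform convergence of that series, and finally handles the boundary points where $p_i^*(a_i)=0$ via the bound $f_i(p_i^0)\le p_i^0(a_i)$. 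None of this appears in your write-up, so what you have is a proof of (c) \emph{conditional} on (a) and (b), not a proof of the lemma.

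Granting (b), your argument for (c) is correct and genuinely different from the paper's. The paper introduces the hitting times $\tau_n=\inf\{t\ge 0:\|p^t-a\|\le 1/n\}$, applies the strong Markov property at $\tau_n$, and uses the full strength of continuity at $a$ (namely $f_n=\min_{\|p-a\|\le 1/n}f(p)\to 1$) to obtain $\mathbb{P}_{p^0}(\exists t_0,\forall t\ge t_0,\ a^t=a)\ge\mathbb{P}_{p^0}\bigl(\cap_m\{\tau_m<\infty\}\bigr)$, which forces the two events to coincide almost surely. You instead need only \emph{some} positive lower bound $c$ on $g$ near $a$, condition at deterministic times, and let L\'evy's $0$--$1$ law applied to the martingale $\mathbb{P}(T\mid\mathcal{F}_t)$ do the squeezing; this is clean and slightly more robust (it would survive if one only knew $g$ bounded below near $a$, rather than continuous with $g(a)=1$). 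One small slip to fix: you shrink $U$ so that $U\subseteq\{p:\|p-a\|<\varepsilon\}$, but the final step (finitely many visits to $U$ implies $\|p^t-a\|\ge\varepsilon$ for all large $t$) requires the reverse inclusion $\{p:\|p-a\|<\varepsilon\}\subseteq U$; simply take $U$ to be an open ball around $a$ on which $g\ge c$, and both uses of $U$ become consistent.
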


\begin{proof} Denote by $f(p^0)$ the probability to always  play $a$ if we start at $p^0$.

a)  Using equalities (\ref{eq1}), we have: $$f(p^0)=\prod_{t=0}^{\infty}\prod_{i=1}^n \frac{1}{1+\sum\limits_{b_i\in A_i: b_i \neq a_i}\frac{  p^0_{i}(b_i)}{p^0_{i}(a_i)}e^{\eta_i t ( u_i(b_i,a_{-i})- u_i(a))}}.$$
 The infinite product is positive  if and only if:
\begin{equation}
\sum_{t=0}^{\infty}\sum_{i=1}^n \log\left(\frac{1}{1+\sum\limits_{b_i\in A_i: b_i \neq a_i}\frac{  p^0_{i}(b_i)}{p^0_{i}(a_i)}e^{\eta_i t ( u_i(b_i,a_{-i})- u_i(a))}}\right)>-\infty. \nonumber
\end{equation}
So our series converge if and only if
\begin{equation} \label{eq3}
    \sum_{t=0}^{\infty}\sum_{i=1}^n \log\left(1+\sum\limits_{b_i\in A_i: b_i \neq a_i}\frac{  p^0_{i}(b_i)}{p^0_{i}(a_i)}e^{\eta_i t ( u_i(b_i,a_{-i})- u_i(a))}\right)<\infty
\end{equation}
Since 
$\lim_{t\to \infty}\sum_{b_i\in A_i: b_i \neq a_i}\frac{  p^0_{i}(b_i)}{p^0_{i}(a_i)}e^{\eta_i t ( u_i(b_i,a_{-i})- u_i(a))}=0,$
\noindent the convergence in (\ref{eq3}) holds if and only if
$$\sum_{t=0}^{\infty}\sum_{i=1}^n\sum_{b_i\in A_i: b_i \neq a_i}\frac{  p^0_{i}(b_i)}{p^0_{i}(a_i)}e^{\eta_i t ( u_i(b_i,a_{-i})- u_i(a))}<\infty.$$

This series is a finite sum of geometric series with argument less than 1 so it converges.\\

b)   Define $f_i(p_i^0)$ as the probability that Player $i$, starting from $p_i^0$, always plays $a_i$ if the other  players always play according to $a_{-i}$.
\begin{equation}\label{eq4}f_i(p^0_i)=\prod_{t=0}^{\infty} \frac{p_i^0(a_i)}{p_i^0(a_i)+\sum\limits_{b_i\in A_i: b_i \neq a_i} p^0_{i}(b_i)e^{\eta_i t ( u_i(b_i,a_{-i})- u_i(a))}}.\end{equation}
Since   $f(p^0)=\prod_{i\in N} f_i(p_i^0)$,  to prove that $f$ is continuous on $\Delta$,   it is  sufficient to show  that for every $i$, $f_i$   is continuous in $\Delta(A_i)$.\\

Fix a player $i$.   Let $\varepsilon>0$ and define $B=\{p^0_i\in\Delta(A_i)| \ p^0_i(a_i)\geq\varepsilon\}$. 
     For every $p^0_i\in B$, proceeding as in a) we have   $f^i(p^0_i)>0$.  Now, we can show that the infinite product in  (\ref{eq4}) is uniformly convergent in $B$. 
    Equivalently, we will show that       $$\sum_{t=0}^{\infty} \log\left(1+\sum\limits_{b_i\in A_i: b_i \neq a_i}\frac{  p^0_{i}(b_i)}{p^0_{i}(a_i)}e^{\eta_i t ( u_i(b_i,a_{-i})- u_i(a))}\right)$$
     is uniformly convergent in $B$. To prove the uniform convergence of the series is sufficient to bound the rest of the series by a sequence which goes to zero independently of $p^0_i$. Note that $p^0_i(b_i)/p^0_i(a_i)<1/\varepsilon$ for every $b_i\in A_i$ and every $p^0_i\in B$, so
       \begin{align}
      \sum_{t=T}^{\infty} &\log\left(1+\sum\limits_{\substack{b_i\in A_i:\\ b_i \neq a_i}}\frac{  p^0_{i}(b_i)}{p^0_{i}(a_i)}e^{\eta_i t ( u_i(b_i,a_{-i})- u_i(a))}\right) \nonumber\\
      \leq & \sum_{t=T}^{\infty} \log\left(1+\frac{1}{\varepsilon}\sum\limits_{\substack{b_i\in A_i:\\ b_i \neq a_i}}e^{\eta_i t ( u_i(b_i,a_{-i})- u_i(a))}\right) \nonumber \\
      \sim & \frac{1}{\varepsilon}\sum_{t=T}^{\infty} \sum\limits_{\substack{b_i\in A_i:\\ b_i \neq a_i}}e^{\eta_i t ( u_i(b_i,a_{-i})- u_i(a))}\xrightarrow[T\to\infty]{} 0. \nonumber
    \end{align}
  Hence, since the infinite product is uniformly convergent in $B$, the continuity of $f_i$ on $B$ follows. \\
     
     Now, to prove that $f_i$ is continuous in $\Delta(A_i)$ it remains  to show that it is continuous at points $p^*_i$, such that $p^*_i(a_i)=0$. For each $p_i^0$ we have: 
\[0\leq  f^i(p^0_i)=p^0_{i}(a_i)\prod_{t=1}^{\infty}\frac{p^0_{i}(a_i)}{p^0_{i}(a_i)+\sum\limits_{b_i\in A_i: b_i \neq a_i}p^0_{i}(b_i)e^{\eta_i t ( u_i(b_i,a_{-i})- u_i(a))}}\leq p^0_{i}(a_i),\]
 so $\lim_{p^0_i\to\eq_i}f_i(p^0_i)=f_i(p^*_i)=0.$ \\
 
 c)  Define  $\tau_n=\inf \{t \geq 0 , \|a-p^t\|\leq 1/n\}$ for each  positive  integer $n$.  Notice that  $(\tau_n< \infty) \supset (\tau_{n+1}<\infty)$. Plainly, $$(\exists t_0, \forall t\geq t_0, a^t=a) \subset (p^t\to a) \subset \cap_{m\geq 1} (\tau_m <\infty).$$

For each   $n$, 
 \begin{align}
  \mathbb{P}_{p^0}(\exists t_0, \forall t\geq t_0, a^t=a)   &=\P_{p^0}(\tau_n <\infty)\P_{p^0}(\exists t_0, \forall t\geq t_0, a^t=a  | \tau_n <\infty) \nonumber \\
    &\geq \P_{p^0}(\tau_n <\infty) f_n, \nonumber\\
    & \geq   \P_{p^0}(\cap_{m\geq 1} (\tau_m <\infty)) f_n, \nonumber \end{align}
where $f_n=  \min_{p, \|a-p\|\leq 1/n} f(p)$. By b),  $f_n$ increases to 1 as $n$ goes to infinity. So
$ \mathbb{P}_{p^0}(\exists t_0, \forall t\geq t_0, a^t=a) \geq  \P_{p^0}(\cap_{m\geq 1} (\tau_m <\infty))$, and since $(\exists t_0, \forall t\geq t_0, a^t=a) \subset (p^t\to a) \subset \cap_{m\geq 1} (\tau_m <\infty)$, we get almost surely:
\begin{equation}\label{eq0001}  \left(\cap_{m\geq 1} (\tau_m <\infty)\right)  \Longleftrightarrow(p^t\xrightarrow[t \to \infty]{}\  a)   \Longleftrightarrow \exists t_0 \; \forall t\geq t_0 \; a^t=a.\end{equation} This ends the proof of Lemma \ref{lem1}.
\end{proof}


The assumption that $a$  is   a strict NE is  thus sufficient for Lemma \ref{lem1} to hold. The next result  shows that  the assumption is also necessary    for a) and b).  


\begin{lemma} \label{lem2} Assume that $a$ in $A$ is not a strict NE of $G$.  

 Then for all $p^0$ in $\Int(\Delta)$, $\mathbb{P}_{p^0}(\exists t_0, \forall t\geq t_0, a^t=a  \quad \forall t\geq 0)=0$, 
 
  and the map $(p^0\mapsto  \mathbb{P}_{p^0}(a^t=a \quad \forall t\geq 0))$ is not continuous on  $\Delta$. 
\end{lemma}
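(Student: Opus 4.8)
The plan is to exploit the failure of the strict–NE inequality to exhibit, on the event that $a$ is eventually played forever, a persistently attractive deviation whose probability does not vanish, and then derive a contradiction from Lemma \ref{LevyExt}. Since $a$ is not a strict NE, I would fix a player $i_0$ and an action $b_{i_0}\neq a_{i_0}$ with $u_{i_0}(b_{i_0},a_{-i_0})\geq u_{i_0}(a)$; moreover I would take $b_{i_0}$ to \emph{maximize} $c\mapsto u_{i_0}(c,a_{-i_0})$ over $A_{i_0}$. Such a maximizer distinct from $a_{i_0}$ exists because the maximum is at least $u_{i_0}(a_{i_0},a_{-i_0})=u_{i_0}(a)$. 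Thus $b_{i_0}$ is a best reply to $a_{-i_0}$, so $u_{i_0}(c,a_{-i_0})-u_{i_0}(b_{i_0},a_{-i_0})\leq 0$ for every $c\in A_{i_0}$. This choice is the crux: it guarantees that the weight $w^t_{i_0}(b_{i_0})$ dominates every other weight of player $i_0$ along trajectories where $a_{-i_0}$ is repeatedly played.

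Introduce $E=\{\exists t_0,\ \forall t\geq t_0,\ a^t=a\}=\bigcup_{t_0}E_{t_0}$ with $E_{t_0}=\{\forall t\geq t_0,\ a^t=a\}$, and fix $t_0$. On $E_{t_0}$ one has $a_{-i_0}^k=a_{-i_0}$ for all $k\geq t_0$, so the weight formula gives, for every $c\in A_{i_0}$ and $t\geq t_0$,
$$\frac{w^t_{i_0}(c)}{w^t_{i_0}(b_{i_0})}=\frac{w^0_{i_0}(c)}{w^0_{i_0}(b_{i_0})}\exp\Big(\eta_{i_0}\sum_{k=0}^{t-1}\big(u_{i_0}(c,a_{-i_0}^k)-u_{i_0}(b_{i_0},a_{-i_0}^k)\big)\Big),$$
where the terms with $k\geq t_0$ are nonpositive by the best-reply property and the finitely many terms with $k<t_0$ are bounded in absolute value by $2t_0\max|u_{i_0}|$. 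Hence this ratio is bounded above by a constant $K(t_0)$ depending only on $p^0$ and $t_0$, uniformly in $t\geq t_0$, which yields $p^t_{i_0}(b_{i_0})\geq 1/(|A_{i_0}|\,K(t_0))=:\kappa(t_0)>0$ for all $t\geq t_0$. Consequently $\frac1t\sum_{k=0}^{t-1}p^k_{i_0}(b_{i_0})$ has liminf at least $\kappa(t_0)$ on $E_{t_0}$.

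Now I would apply Lemma \ref{LevyExt} with $B=\{c\in A:\ c_{i_0}=b_{i_0}\}$, for which $p^k(B)=p^k_{i_0}(b_{i_0})$ and $\1_{a^k\in B}=\1_{a^k_{i_0}=b_{i_0}}$. Almost surely $\frac1t\sum_{k<t}\big(\1_{a^k_{i_0}=b_{i_0}}-p^k_{i_0}(b_{i_0})\big)\to 0$, so on $E_{t_0}$ the empirical frequency of $b_{i_0}$ also has positive liminf, forcing player $i_0$ to play $b_{i_0}$ infinitely often; this contradicts $E_{t_0}$, on which $a^k_{i_0}=a_{i_0}\neq b_{i_0}$ for all $k\geq t_0$. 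Therefore $\P_{p^0}(E_{t_0})=0$ for every $t_0$, hence $\P_{p^0}(E)=0$, which is the first assertion. For the second assertion, set $g(p^0)=\P_{p^0}(a^t=a\ \forall t\geq 0)$. Since $\{a^t=a\ \forall t\}\subseteq E$, the first part gives $g\equiv 0$ on $\Int(\Delta)$; at the vertex $p^0=a$ (identifying $a$ with the product of Dirac masses), the remark that $p^0_i(c_i)=0$ is preserved for all $t$ shows $p^t=a$ for all $t$ almost surely, whence $g(a)=1$. Any sequence in $\Int(\Delta)$ converging to $a$ then witnesses the discontinuity of $g$ at $a$.

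The main obstacle is the uniform lower bound $p^t_{i_0}(b_{i_0})\geq\kappa(t_0)$ on $E_{t_0}$: this is exactly where the best-reply choice of $b_{i_0}$ is indispensable, since it prevents any competing action of player $i_0$ from draining probability away from $b_{i_0}$ along the trajectory where $a_{-i_0}$ is repeated, and it is the point requiring care in controlling the random but finite initial segment $k<t_0$.
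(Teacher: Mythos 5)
Your proof is correct, but it takes a genuinely different route from the paper's. The paper argues directly from the explicit infinite-product formula of Lemma \ref{lem1}: writing $f(p^0)=\P_{p^0}(a^t=a\ \forall t\geq 0)$ as a product over $t$ and $i$, the existence of a deviation $\hat a_i\neq a_i$ with $u_i(\hat a_i,a_{-i})-u_i(a)\geq 0$ means the corresponding factors do not converge to $1$, so $f\equiv 0$ on $\Int(\Delta)$; the event $\{\exists t_0,\forall t\geq t_0,\ a^t=a\}$ is then killed by a countable union bound, each term vanishing because $p^{t_0}$ remains in $\Int(\Delta)$ almost surely. You instead run a pathwise contradiction on each event $E_{t_0}$ via Lemma \ref{LevyExt}: your choice of $b_{i_0}$ as a \emph{best reply} to $a_{-i_0}$ (rather than an arbitrary weakly improving deviation) is the crucial device, and you are right to flag it --- with a non-maximizing deviation the bound $p^t_{i_0}(b_{i_0})\geq\kappa(t_0)$ would fail, since a strictly better reply to $a_{-i_0}$ could drain its probability to zero along $E_{t_0}$; with the maximizer, the weight ratios are controlled uniformly in $t\geq t_0$, and Levy's extension of Borel--Cantelli then forces $b_{i_0}$ to be played infinitely often, which is absurd on $E_{t_0}$. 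The paper's computation buys brevity and reuses the formula already established for Lemma \ref{lem1}; your argument buys a proof that never evaluates the product and disposes of every tail event $E_{t_0}$ by one pathwise argument, without invoking the Markov property in the union bound. The discontinuity parts of the two proofs coincide ($g\equiv 0$ on $\Int(\Delta)$ while $g(a)=1$ at the vertex $a$).
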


\begin{proof}
    Define as before, for every $p^0$ in $\Delta$: 
    $$f(p^0)=\mathbb{P}_{p^0}(a^t=a  \quad \forall t\geq 0)= \prod_{t=0}^{\infty}\prod_{i=1}^n\frac{1}{1+\sum\limits_{b_i\in A_i: b_i \neq a_i}\frac{  p^0_{i}(b_i)}{p^0_{i}(a_i)}e^{\eta_i t ( u_i(b_i,a_{-i})- u_i(a))}}\leq 1.$$

   Since $a$ is not a strict NE, there exists a player $i$ and an action $\hat{a}_i\neq a_i$ such that $u_i(\hat{a}_i,a_i)-u_i(a)\geq 0$. For $p^0$ in $\Int(\Delta)$, 
    $$\lim_{t\to\infty}\frac{p^0_i(\hat{a}_i)}{p^0_i(a_i)}e^{\eta_i t ( u_i(\hat{a}_i,a_{-i})- u_i(a))}=\begin{cases}
        \frac{p^0_i(\hat{a}_i)}{p^0_i(a_i)} &if \ u_i(\hat{a}_i,a_i)-u_i(a)=0 \\
        \infty & if \ u_i(\hat{a}_i,a_i)-u_i(a)>0.
    \end{cases}$$
    Hence when $t\mapsto \infty$, 
    $$\frac{1}{1+\sum\limits_{b_i\in A_i: b_i \neq a_i}\frac{  p^0_{i}(b_i)}{p^0_{i}(a_i)}e^{\eta_i t ( u_i(b_i,a_{-i})-u_i(a))}}\nrightarrow 1,$$
   so $f(p^0)=0$ for each $p^0$ in $\Int(\Delta)$.
   
  So  $\mathbb{P}_{p^0}(\exists t_0, \forall t\geq t_0, a^t=a  \quad \forall t\geq 0)\leq \sum_{t_0} \mathbb{P}_{p^0}( \forall t\geq t_0, a^t=a  \quad \forall t\geq 0)=0$.
  And $f$ is discontinuous since $f(a)=1$. 
    \end{proof}

    Building on Lemma \ref{lem1}, we can prove the following result.
    
    \begin{lemma} \label{lem25} Let $a$ be a strict Nash equilibrium  of $G$, and $K$ be a compact subset of $\Delta$ such that $\min_{p \in K} p(a)>0$. Assume     $p^0$ in $K$,  we have $\mathbb{P}_{p^0}$-almost surely:
 
 $$\left(\exists t_0, \forall t\geq t_0,  a^t=a \right)\; {\rm   or} \; \left( \exists t_0, \forall t\geq t_0, p^t\notin K\right).$$
    
If $a\notin K$, then   there exists $t_0$ such that $\forall t\geq t_0, p^t\notin K$. 
      \end{lemma}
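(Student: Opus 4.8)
The plan is to combine parts a), b), c) of Lemma \ref{lem1} with Lévy's 0-1 law (martingale convergence). Throughout I write $f(p)=\mathbb{P}_{p}(a^t=a\ \forall t\ge 0)$ for the probability of always playing $a$ when starting from $p$, exactly as in the proof of Lemma \ref{lem1}. The first step is to record a \emph{uniform} lower bound on $f$ over $K$. Since $\min_{p\in K}p(a)>0$, every $p\in K$ satisfies $p_i(a_i)>0$ for all $i$, so part a) of Lemma \ref{lem1} gives $f(p)>0$ on $K$; by part b), $f$ is continuous on $\Delta$, and as $K$ is compact this yields $g:=\min_{p\in K}f(p)>0$.

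For the dichotomy, I would introduce the tail event $A_\infty:=\{\exists t_0,\ \forall t\ge t_0,\ a^t=a\}\in\mathcal{F}_\infty$ and the recurrence event $V:=\{p^t\in K\text{ for infinitely many }t\}$, and aim to show $\mathbb{P}_{p^0}(V\setminus A_\infty)=0$. Consider the bounded martingale $M_t:=\mathbb{P}_{p^0}(A_\infty\mid\mathcal{F}_t)$; by Lévy's 0-1 law $M_t\to\mathbf{1}_{A_\infty}$ almost surely. The key estimate is that $M_t\ge g$ on $\{p^t\in K\}$: indeed $A_\infty\supseteq\{a^s=a\ \forall s\ge t\}$, and by the homogeneous Markov property $\mathbb{P}_{p^0}(a^s=a\ \forall s\ge t\mid\mathcal{F}_t)=f(p^t)\ge g$ whenever $p^t\in K$. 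On $V$ there are infinitely many such $t$, so $\mathbf{1}_{A_\infty}=\lim_t M_t=\limsup_t M_t\ge g>0$ on $V$, which forces $\mathbf{1}_{A_\infty}=1$ there. Hence $V\subseteq A_\infty$ up to a null set, i.e. almost surely either $A_\infty$ holds or $V$ fails; since $V^c=\{\exists t_0,\ \forall t\ge t_0,\ p^t\notin K\}$, this is exactly the claimed disjunction.

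For the second statement, suppose $a\notin K$. As $K$ is compact and $a\notin K$, the distance $\delta:=\min_{p\in K}\|p-a\|$ is strictly positive. I then apply the dichotomy just proved. On the event $A_\infty$, part c) of Lemma \ref{lem1} gives $p^t\xrightarrow[t\to\infty]{}a$, so $\|p^t-a\|<\delta$ for all large $t$, whence $p^t\notin K$ eventually; on the complementary event $V^c$ we already have $p^t\notin K$ eventually. In both cases $\exists t_0,\ \forall t\ge t_0,\ p^t\notin K$ holds $\mathbb{P}_{p^0}$-almost surely.

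\textbf{Main obstacle.} The delicate point is the inequality $M_t\ge g$ on $\{p^t\in K\}$, which rests on correctly invoking the homogeneous Markov property to identify the conditional probability of playing $a$ forever after time $t$ with $f(p^t)$, together with the uniform bound $g>0$ obtained from compactness of $K$ and the positivity (part a) and continuity (part b) of $f$. Once this estimate and the measurability $A_\infty\in\mathcal{F}_\infty$ are in place, the rest is a routine application of martingale convergence and of the separation of $a$ from $K$.
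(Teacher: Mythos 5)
Your proof is correct, and it takes a genuinely different route from the paper's. The paper argues in the reverse order: it first treats the case $a \notin K$ directly, introducing $H_\infty = \{\exists n,\ \forall t \geq n,\ p^t \notin K\}$ and $D = \min_{p\in K}\mathbb{P}_p(H_\infty)$, which is positive because always playing $a$ forces $p^t \to a$ and hence eventual escape from $K$ (Lemma \ref{lem1} a), b) and compactness); it then applies the strong Markov property at the first return time to $K$ after a well-chosen stage $n_0$ to derive the inequality $D \geq D + \frac{D}{2}(1-D)$, which forces $D=1$. The general dichotomy is only then deduced by exhausting $K \setminus \{a\}$ with the compact sets $K_n = \{p \in K : \|p-a\| \geq 1/n\}$ and invoking Lemma \ref{lem1} c). You instead prove the dichotomy first, in one stroke and for arbitrary $K$, by combining the same uniform bound $g = \min_{p \in K} f(p) > 0$ with L\'evy's 0-1 law for $M_t = \mathbb{P}_{p^0}(A_\infty \mid \mathcal{F}_t)$ and the Markov property at deterministic times; the case $a \notin K$ then falls out as a corollary via Lemma \ref{lem1} c) and the separation $\min_{p\in K}\|p-a\| > 0$. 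Both proofs rest on exactly the same ingredients from Lemma \ref{lem1}; what differs is the probabilistic engine converting ``uniformly positive absorption probability at each visit to $K$'' into ``almost sure absorption or escape'': your martingale-convergence argument is shorter and avoids both the fixed-point inequality and the exhaustion step, while the paper's renewal-type inequality is more elementary (no appeal to L\'evy's theorem) and is reused essentially verbatim later in the paper (in the proof of the lemma following Example \ref{exa18}), which is presumably why the authors set it up that way.
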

      
\begin{proof}
a) Assume $a\notin K$.

Define the events $H_n=\{\forall t\geq n, p^t \notin K\}$ for $n \geq 0$, and $H_\infty= \cup_n H_n=\{\exists n, \forall t\geq n, p^t\notin K\}$, and let $D=\min_{p\in K} \P_p(H_\infty)$.
We have $D\geq \min_{p \in K} \P_p(\forall t\geq 0, a^t=a)$ since $a\notin K$, and $\min_{p \in K} \P_p(\forall t\geq 0, a^t=a) >0$ by Lemma \ref{lem1}. So $D>0$, and we want to show that $D=1$. 

Fix $p^0\in K$. $\P_{p_0}(H_\infty)\geq D$ and since $H_n \subset H_{n+1}$ for each $n$, $\P_{p_0}(H_\infty)=\lim_{n\to \infty} \P_{p_0}(H_n)$. So there exists $n_0$ such that  $\P_{p_0}(H_{n_0})\geq D/2$. Define $\tau :=\inf\{ t\geq n_0: p^t\in K\}$, we have $H^{c}_{n_0}=\{\tau < \infty\}$.
Now,
\begin{align}
    \P_{p^0}(H_{\infty})&=\P_{p^0}(H_{n_0})\P_{p^0}(H_{\infty}| H_{n_0})+\P_{p_0}(\tau<\infty)\P_{p^0}(H_{\infty}|\tau < \infty) \nonumber \\
    &=\P_{p^0}(H_{n_0})+\E_{p^0}( \ind_{\tau<\infty}\E_{p^0}(\ind_{H_{\infty}}|\mathcal{F}_{\tau})) \nonumber \\
    &=\P_{p^0}(H_{n_0})+\E_{p^0}( \ind_{\tau<\infty}\P_{p^{\tau}}(H_{\infty})) \nonumber \\
    &\geq \P_{p^0}(H_{n_0})+(1-\P_{p^0}(H_{n_0}))D \nonumber\\
    &=D +\P_{p^0}(H_{n_0})(1-D)\nonumber \\
    &\geq D+\frac{D}{2}(1-D). \nonumber
\end{align}
{As this is true for every $p^0 \in K$, we obtain $D\geq D+\frac{D}{2}(1-D)$, which together with the fact that $D\in (0,1]$ implies that $D=1$.} \\

b) Assume now that $a\in K$. 

For each positive integer $n$, define  $K_n=\{p \in K, \|a-p\|\geq 1/n\}$. If $p^0 \neq a$, for $n$ large enough $p^0\in K_n$ so by a)  we have that almost surely  $\exists t_0 \;s.t.\;  \forall t \geq t_0, p^t\notin K_n$. So almost surely for each $n$ there exists $t_0$ such that for all $t\geq t_0$, $(p^t\notin K)$ or ($\|a-p^t\|<1/n$). 

   We deduce that  a.s.:
   \begin{equation}\label{eq001}(\exists t_0 \;, \forall t\geq t_0, p^t \notin K)\; {\rm  or}\;  (\forall n, \exists t\geq 0, \;\|p^t-a\|<1/n).\end{equation} 
Using Lemma \ref{lem1} c), we obtain: 
$$(\exists t_0 \;, \forall t\geq t_0, p^t \notin K)\; {\rm  or}\; (\exists t_0, \forall t\geq t_0, a^t=a).$$   \end{proof}
    
We can now prove our first theorem.

\begin{theorem}\label{thm1}
Let $a$ be a strict Nash equilibrium of $G$, and $p^0\in \Delta$. Then $\mathbb{P}_{p^0}$-almost surely:
$$\left(\exists t_0, \forall t\geq t_0,  a^t=a \right)\; {\rm   or} \left( p^t(a) \xrightarrow[t \to \infty]{} 0\right).$$
\end{theorem}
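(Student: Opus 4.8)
The plan is to derive the dichotomy directly from Lemma~\ref{lem25}, applied to the compact sets $K_m := \{p \in \Delta : p(a) \geq 1/m\}$ whose thresholds $1/m$ decrease to $0$; each $K_m$ is compact, satisfies $\min_{p \in K_m} p(a) = 1/m > 0$, and contains $a$. Write $E := \{\exists t_0, \forall t \geq t_0, a^t = a\}$ for the ``eventually always play $a$'' event. The guiding observation is that the negation of $p^t(a) \to 0$ is exactly $\limsup_t p^t(a) > 0$, i.e.\ $p^t(a) \geq 1/m$ for infinitely many $t$ and some integer $m$; so proving $p^t(a) \to 0$ on $E^c$ amounts to ruling out, for every $m$, that $p^t$ returns to $K_m$ infinitely often.

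First I would dispose of the degenerate case $p^0(a) = 0$: then $p^0_i(a_i) = 0$ for some player $i$, so by the remark following the model $\mathbb{P}_{p^0}$-almost surely $p^t_i(a_i) = 0$ for all $t$, whence $p^t(a) = \prod_j p^t_j(a_j) = 0$ for all $t$ and $p^t(a) \to 0$ holds trivially. Assume now $p^0(a) > 0$, and fix any integer $m \geq 1/p^0(a)$, so that $p^0 \in K_m$. Then Lemma~\ref{lem25} applies and yields, $\mathbb{P}_{p^0}$-almost surely, that $E$ holds or $\exists t_0, \forall t \geq t_0, p^t \notin K_m$. Since $p^t \notin K_m$ means precisely $p^t(a) < 1/m$, the second alternative says $\limsup_t p^t(a) \leq 1/m$.

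Finally I would intersect these probability-one events over the countably many integers $m \geq \lceil 1/p^0(a) \rceil$. On the resulting probability-one set, take any sample path lying in $E^c$: for every such $m$ the second alternative must then hold, so $\limsup_t p^t(a) \leq 1/m$ for all large $m$, forcing $\limsup_t p^t(a) = 0$, i.e.\ $p^t(a) \to 0$. Hence almost surely $E$ holds or $p^t(a) \to 0$, which is the claim.

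I expect essentially no serious obstacle beyond Lemma~\ref{lem25} itself, of which this theorem is really a corollary; the only points requiring care are that the event $E$ does not depend on $m$ (so that the countable intersection collapses the family of thresholds $1/m$ into the single conclusion $p^t(a)\to 0$ on $E^c$), the harmless restriction to $m \geq 1/p^0(a)$ needed to guarantee $p^0 \in K_m$ before invoking the lemma, and the separate trivial treatment of the boundary case $p^0(a) = 0$.
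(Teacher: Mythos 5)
Your proof is correct and takes essentially the same route as the paper's: the paper likewise deduces the theorem from Lemma~\ref{lem25} applied to the compact sets $K_\ve=\{p\in\Delta : p(a)\geq \ve\}$ for countably many thresholds $\ve\in(0,p^0(a))\cap\Q$, after disposing of the trivial case $p^0(a)=0$. If anything, your pathwise intersection of the almost-sure dichotomy events over the thresholds $1/m$ is phrased more carefully than the paper's case split, but the key lemma and decomposition are identical.
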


\begin{proof} The result is clear if $p^0(a)=0$, so we assume that $p^0(a)>0$. 
 
For every $\ve>0$, consider $K_\ve=\{p \in \Delta, p(a)\geq \ve\}.$ Either for every $\ve\in (0, p^0(a))\cap \Q$, the set $\{t\geq 0, p^t\in K_\ve\}$ is almost surely finite, and $p^t(a) \xrightarrow[t \to \infty]{} 0$ almost surely. Otherwise, there exists $\ve\in (0, p^0(a))\cap \Q$, such that  the set $\{t\geq 0, p^t\in K_\ve\}$ is almost surely infinite, and lemma  \ref{lem25} implies that almost surely: $\exists t_0, \forall t\geq t_0,  a^t=a $. 
\end{proof}

 \begin{example} \label{exa1}
   Let $G$ be the 2-player game with payoff matrices: 
     \[
    \bordermatrix{
     & L & R  \cr
    T & (1,1)^{\r{\bullet}} & (0, 0) \cr
    B & (0,0) & (1,1)^{\r{\bullet}}
    }.
    \]
 Here we have 2 strict NE, identified with a red dot: $(T,L)$ and $(B,R)$.   Fix $p^0$ in $\Int(\Delta)$, and let us prove that almost surely:
 
\begin{equation} \label{eq17} \exists t_0, \forall t\geq t_0,  a^t=(T,L) \;{\rm  or}\;  \exists t_0, \forall t\geq t_0,  a^t=(B,R).\end{equation}
 
 For the sake of contradiction, assume that with $\mathbb{P}_{p^0}$-positive probability, $(\forall t_0, \exists  t\geq t_0,  a^t\neq(T,L))$ and $(\forall t_0, \exists  t\geq t_0,  a^t\neq(B,R))$. Then by Theorem \ref{thm1}, we have $p^t(T,L)  \xrightarrow[t \to \infty]{} 0$ and $p^t(B,R)  \xrightarrow[t \to \infty]{} 0$. So $p^t(\{T,R\}\cup\{B,L\}) \xrightarrow[t \to \infty]{} 1$. 
{As $p^t$ is a product probability, it implies that $\max\{ p^t(T,R), p^t(B,L) \} \xrightarrow[t \to \infty]{} 1$.}  
But {it follows directly from \eqref{eq2} that}  for $\varepsilon>0$ small, it is not possible to go from some period $t$ to the next period $t+1$   from a mixed strategy profile $p^t$ which is $\varepsilon$-close to $(T,R)$ to some $p^{t+1}$ which is $\varepsilon$-close to $(B,L)$. This implies that either $p^t(T,R)  \xrightarrow[t \to \infty]{} 1$ or $p^t(B,L)  \xrightarrow[t \to \infty]{} 1$. 
{ 
Assume  for instance that  $p^t(T,R)  \xrightarrow[t \to \infty]{} 1$. Then $p^t_1(T) \xrightarrow[t \to \infty]{} 1$ and  $p^t_2(R) \xrightarrow[t \to \infty]{} 1$. 
However we have for each $t$, by equation (\ref{eq1}):
\[\frac{p_1^t(B)}{p_1^t(T)}= \frac{p_1^0(B)}{p_1^0(T)} \exp(\eta_1\sum_{k=0}^{t-1} (u_1(B,a_2^k)-u_1(T,a_2^k))).\]
As $\frac{p_1^t(B)}{p_1^t(T)}\xrightarrow[t \to \infty]{}0$, we get $\sum_{k=0}^{t-1} (u_1(B,a_2^k)-u_1(T,a_2^k))\xrightarrow[t \to \infty]{}-\infty$, so 
\[ \sum_{k=0}^{t-1} \1_{a_2^k=R} -   \1_{a_2^k=L} \xrightarrow[t \to \infty]{}-\infty.\]   
By Lemma \ref{LevyExt} we have 
\[ \frac1{t}  \sum_{k=0}^{t-1} \left( \1_{a_2^k=R}-p_2^k(R)\right) \xrightarrow[t \to \infty]{}0 \text{ and } \frac1{t}  \sum_{k=0}^{t-1} \left( \1_{a_2^k=L}-p_2^k(L)\right) \xrightarrow[t \to \infty]{}0.\]
It follows that $p^t_2(R) \xrightarrow[t \to \infty]{} 1$ implies $ \frac1{t}  \sum_{k=0}^{t-1} (\1_{a_2^k=R}-\1_{a_2^k=L})  \xrightarrow[t \to \infty]{}1$.
This is a contradiction, and repeating the argument for the case $p^t(B,L) \xrightarrow[t \to \infty]{} 1$,  (\ref{eq17}) is proved.  \hfill $\Box$
}
 \end{example}

 \vspace{0,5cm}

 As a corollary of Theorem \ref{thm1}, we obtain   ``convergence to the boundary'':  if a strict NE exists, then for each $p^0$ in $\Delta$, almost surely $d(p^t, \partial \Delta) \xrightarrow[t\to \infty]{} 0$. We can strengthen  this result as follows.
 
 \begin{definition} 
  Denote by $\SNE$ the finite set of strict  Nash equilibria of $G$, and let  
    $$Z=\{p \in \Delta, p(SNE)=0\}$$ be the set of mixed strategy profiles playing each  strict NE with probability 0.
 \end{definition}

\begin{corollary} \label{cor1}
For any $p^0\in \Delta$, we have  $\mathbb{P}_{p^0}$-almost surely:
 Either there exists a strict Nash equilibrium $a\in A$ and $t_0$ such that  $a^t=a$ for all $t\geq t_0$, 
 or   $d(p^t, Z)\xrightarrow[t \to \infty]{}0$. 
\end{corollary}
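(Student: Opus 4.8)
The plan is to apply Theorem \ref{thm1} to each strict Nash equilibrium separately and then patch the resulting dichotomies together over the finite set $\SNE$. First I would use that $\SNE$ is finite: for each $a\in\SNE$, Theorem \ref{thm1} furnishes a $\P_{p^0}$-almost-sure event on which either $a$ is eventually always played ($\exists t_0,\ \forall t\geq t_0,\ a^t=a$) or $p^t(a)\to 0$. Taking the finite intersection of these events (still of probability $1$), I obtain a single almost-sure event on which, for \emph{every} $a\in\SNE$ simultaneously, one of these two alternatives holds.

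On this event I would argue by a simple dichotomy. If some $a\in\SNE$ satisfies the first alternative, then the first conclusion of the corollary holds and nothing more is needed (in particular, if $\SNE=\emptyset$ this case never arises, but then $Z=\Delta$ and the statement is vacuous). Otherwise the second alternative holds for every strict equilibrium at once, so that $p^t(a)\to 0$ for all $a\in\SNE$; summing over the finitely many strict equilibria gives $p^t(\SNE)=\sum_{a\in\SNE}p^t(a)\to 0$.

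It then remains to pass from $p^t(\SNE)\to 0$ to $d(p^t,Z)\to 0$, which is a purely deterministic statement about points of $\Delta$. I would establish it by compactness: the map $g:p\mapsto p(\SNE)$ is continuous on the compact set $\Delta$ and $Z=g^{-1}(\{0\})$. For any $\varepsilon>0$, the set $\{p\in\Delta:\ d(p,Z)\geq\varepsilon\}$ is compact and disjoint from $Z$, so $g$ attains a strictly positive minimum $m_\varepsilon$ there; hence $g(p)<m_\varepsilon$ forces $d(p,Z)<\varepsilon$. Since $g(p^t)\to 0$, for each $\varepsilon$ we eventually have $g(p^t)<m_\varepsilon$, and therefore $d(p^t,Z)<\varepsilon$ eventually, which is exactly $d(p^t,Z)\to 0$.

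I do not expect a genuine obstacle here: once Theorem \ref{thm1} is available the corollary is almost immediate, and the only step requiring a little care is the last one, namely the passage from $p^t(\SNE)\to 0$ to $d(p^t,Z)\to 0$. Even that is a routine argument resting on the continuity of $p\mapsto p(\SNE)$ and the closedness of $Z$ in the compact space $\Delta$.
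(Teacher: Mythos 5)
Your proof is correct and takes essentially the same route as the paper's: apply Theorem \ref{thm1} to each of the finitely many strict Nash equilibria, use the resulting dichotomy, and pass from $p^t(\SNE)\to 0$ to $d(p^t,Z)\to 0$ via continuity of $p\mapsto p(\SNE)$ and compactness of $\Delta$. The only difference is that you spell out the compactness step, which the paper states in one line.
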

\begin{proof}
Assume that for each $a$ $SNE$, $\{t\geq 0, a^t\neq a\}$ is infinite. Then by Theorem \ref{thm1}, $\sum_{a \in SNE} p^t(a)\xrightarrow[t \to \infty]{}0$. Since $\Delta$ is compact and $(p\mapsto \sum_{a \in SNE} p(a))$ is continuous, this implies that $d(p^t, Z)\xrightarrow[t \to \infty]{}0$.
\end{proof}

\begin{exa}\label{exa2} Consider the following 2-player common payoff game:

\centerline{$\bordermatrix{
 & L & M & R \cr
T & (3,3)^{\r{\bullet}} & (0, 0) & (2,2) \cr
B & (0,0) & (3,3)^{\r{\bullet}}  & (2,2)
}$ }
 
Here we have 2 strict Nash equilibria $(T,L)$ and $(B,M)$,  and a continuum of other Nash equilibria: $(xT+(1-x)B,R)$ with $1/3\leq x \leq 2/3$.  The set $Z$ is drawn in green in the picture below, which represents the set of mixed strategies $\Delta$.  

\begin{center}
  \begin{tikzpicture}[scale=0.5]
\draw[gray, densely dotted] (0,0) -- (4,4);
\draw[green, thick] (6,0) -- (10,4);
\draw[green, thick] (0,0) -- (6,0);
 \draw[green, thick](3,0)node[below]{$Z$};
\draw[gray, densely dotted] (4,4) -- (10,4);
\draw[green, thick] (0,0) -- (1,6);
\draw[gray, densely dotted] (1,6) -- (4,4);
\draw[gray, thick] (6,0) -- (7,6);
\draw[gray, thick] (7,6) -- (10,4);
\draw[gray, thick] (1,6) -- (7,6);
\filldraw[green] (0,0) circle (2pt) node[anchor=east, black]{(B,R)};
\filldraw[red] (4,4) circle (2pt) node[anchor=east, black]{(B,M)};
\filldraw[green] (1,6) circle (2pt) node[anchor=east, black]{(B,L)};
\filldraw[green] (6,0) circle (2pt) node[anchor=west, black]{(T,R)};
\filldraw[green] (10,4) circle (2pt) node[anchor=west, black]{(T,M)};
\filldraw[red] (7,6) circle (2pt) node[anchor=west, black]{(T,L)};
\end{tikzpicture}  
\end{center}
Here almost surely the play converges to $(T,L)$, or it converges to $(B,M)$, or the distance from $p^t$ to the green set $Z$ goes to 0 when $t\to \infty$. \hfill $\Box$
 \end{exa}

Let us now move on to  another consequence  of Theorem \ref{thm1}.  

\begin{definition}For each $t\geq  0$, define the random variable:
    $$L_t=\P_{p^0}(a^{t}\in \SNE \ |\ {\cal F}_{t-1})=p^t(\SNE).$$
      \end{definition}  $L_t$ is a random variable with values in $[0,1]$, representing the probability to play a strict Nash equilibrium at stage $t$ given the past history. Notice that if  there is no $SNE$   then $L_t=0$ for every $t$.

    \begin{corollary} \label{cor2}
$(L_t)_t$ converges almost surely to $0$ or $1$.
\end{corollary}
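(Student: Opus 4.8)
The plan is to read the conclusion directly off the dichotomy established in Corollary \ref{cor1}, treating its two alternatives separately and showing that the first forces $L_t \to 1$ while the second forces $L_t \to 0$. If $G$ has no strict Nash equilibrium the statement is trivial, since then $L_t = p^t(\SNE) = 0$ for every $t$; so I would assume $\SNE \neq \emptyset$ and work on the almost-sure event supplied by Corollary \ref{cor1}.

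On the event where there exist a strict Nash equilibrium $a$ and a time $t_0$ with $a^t = a$ for all $t \geq t_0$, I would invoke Lemma \ref{lem1}~c), whose equivalence gives $p^t \xrightarrow[t \to \infty]{} a$, i.e. $p^t(a) \to 1$. (Alternatively one argues directly from (\ref{eq1}): once $a_{-i}^k = a_{-i}$ for all large $k$, the exponents $\eta_i \sum_k (u_i(b_i,a_{-i}^k) - u_i(a_i,a_{-i}^k))$ tend to $-\infty$ for every $b_i \neq a_i$ because $a$ is strict, so $p_i^t(a_i) \to 1$ and hence $p^t(a) = \prod_{i} p_i^t(a_i) \to 1$.) Since $a \in \SNE$, we have $L_t = p^t(\SNE) \geq p^t(a)$, and together with $L_t \leq 1$ this yields $L_t \to 1$.

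On the complementary event, Corollary \ref{cor1} gives $d(p^t, Z) \to 0$. Here I would use that $Z$ is a closed, hence compact, subset of $\Delta$ on which the continuous map $p \mapsto p(\SNE) = \sum_{a \in \SNE} p(a)$ vanishes by definition of $Z$; continuity then forces $L_t = p^t(\SNE) \to 0$. Combining the two cases, $(L_t)_t$ converges almost surely to $0$ or to $1$.

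I do not expect a serious obstacle, as the corollary is essentially a repackaging of Corollary \ref{cor1}. The only point requiring a little care is the first case: one must check that eventual play of a single strict equilibrium $a$ drives the whole mass $p^t(\SNE)$ (not merely the coordinate $p^t(a)$) to $1$, which is immediate from $p^t(a) \leq p^t(\SNE) \leq 1$. A secondary subtlety is simply ensuring the two alternatives of Corollary \ref{cor1} are exhaustive up to a null set, which is precisely what that corollary asserts.
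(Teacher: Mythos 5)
Your proof is correct and follows essentially the same route as the paper: both apply the dichotomy from Corollary \ref{cor1}, then use Lemma \ref{lem1}~c) to get $p^t \to a$ (hence $L_t \to 1$) in the first case, and the continuity of $p \mapsto p(\SNE)$ vanishing on $Z$ (hence $L_t \to 0$) in the second. Your added remarks—the trivial case $\SNE = \emptyset$ and the bound $L_t \geq p^t(a)$—are fine but cover points the paper leaves implicit.
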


\begin{proof}
By corollary \ref{cor1}:

- either there exists $a\in \SNE$ such that $a^t=a$ for all sufficiently large $t$. Then by Lemma \ref{lem1},  $p^t\xrightarrow[t\to \infty]{} a$, so $L_t\xrightarrow[t\to \infty]{} 1$.

-or $d(p^t, Z)\xrightarrow[t \to \infty]{}0$. This implies $p^t(\SNE)=L_t\xrightarrow[t\to \infty]{} 0$.
\end{proof}

\begin{exa}\label{exa3}Here is an example    where both events $(L_t\xrightarrow[t\to \infty]{} 1)$ and $(L_t\xrightarrow[t\to \infty]{} 0)$ have positive probability.  
\[
\bordermatrix{
 & B_1 & B_2 & B_3 \cr
A_1 & (0,0) & (-1, -1) & (-1,-1)\cr
A_2 & (-1,-1) & (1,1) & (1,0) \cr
A_3 & (-1,-1) & (1,0) & (1,1)
}.
\]
Let $p^0$ be in $\Int(\Delta)$. $(A_1,B_1)$ is {the unique} strict Nash equilibrium, so by Lemma \ref{lem1} there is a positive probability to play $(A_1,B_1)$ at each stage, so that  $(L_t\xrightarrow[t\to \infty]{} 1)$.

Define now the event, for $t\geq 0$: 
\[E_t=\cap_{s=0}^t \{a^s_1\in\{A_2,A_3\}, a^s_2\in \{B_2,B_3\}\}.\]
$E_t$ is  the event where  both $A_1$ and $B_1$ are not   played in the first $t+1$ stages. If   $E_{\infty}$ holds,  then $p_1^t(A_1)$ and $p_2^t(B_1)$ go to zero. So  it is sufficient to prove    $\P_{p^0}(E_{\infty})>0$ to obtain  a positive probability that $(L_t\xrightarrow[t\to \infty]{} 0)$. 

We have $ \P_{p^0}(E_{\infty})=\P_{p^0}(E_0)\prod_{t=1}^{\infty}(1-\P_{p^0}(E^c_{t+1}|E_t)).$

Now
\begin{align}
\P_{p^0}(a_1^{t+1}=A_1| E_t)&=\frac{p_1^0(A_1)e^{-\eta_1(t+1)}}{p_1^0(A_1)e^{-\eta_1(t+1)}+p_1^0(A_2)e^{\eta_1(t+1)}+p_1^0(A_3)e^{\eta_1(t+1)}}\nonumber\\
&=\frac{p_1^0(A_1)}{p_1^0(A_1)+(1-p_1^0(A_1))e^{2\eta_1(t+1)}},\nonumber
\end{align}
Similarly, 
$\P_{p^0}(a_2^{t+1}=B_1| E_t)=\frac{p_2^0(B_1)}{p_2^0(B_1)+(1-p_2^0(B_1))e^{2\eta_2(t+1)}}$.
Defining 
\[ c=\min\left\{\frac{1-p_1^0(A_1)}{p_1^0(A_1)}, \frac{1-p_2^0(B_1)}{p_2^0(B_1)}\right\}>0 \text{ and } \eta=\min\{\eta_1,\eta_2\}>0,\]
we obtain:
$\P_{p^0}(E_{t+1}^c|E_t)\leq \frac{2}{1+ce^{2\eta(t+1)}}$.
Hence
\[\P_{p^0}(E_{\infty})\geq \P_{p^0}(E_0) \prod_{t=1}^{\infty}\left(1- \frac{2}{1+ce^{2\eta(t+1)}}\right)>0,\]
and $(L_t\xrightarrow[t\to \infty]{} 0)$ also has positive probability.
\end{exa}

\section{Nash Equilibria with Equalizing Payoffs}
\label{sec:neep}
It  is well known that in   a Nash equilibrium $p$, if for some player $i$  two  pure actions $a_i$, $b_i$ in $A_i$ are played with positive  probability under   $p_i$, then the expected payoffs $u_i(a_i,p_{-i})$ and $u_i(b_i, p_{-i})$ coincide. In the following definition, we require this equality between payoffs to hold not only in expectation, but also almost surely. 

\begin{definition}   A  Nash Equilibrium with Equalizing Payoffs, or $NEEP$ for short, is a Nash equilibrium $p$ in $\Delta$ such that  for each player $i$, for all $a_i$, $b_i$ in the support of $p_i$, for all $a_{-i}$ in the support of $p_{-i}$:
 $$u_i(a_i,a_{-i})=u_i(b_i,a_{-i}).$$
 We denote by ${\cal P}$ the set of NEEP.
\end{definition}

Notice that any pure Nash equilibrium is a  NEEP, and any NEEP is a Nash equilibrium. This definition appears already in \cite{WSNE} under the name ``Weakly Stable Nash Equilibrium''.  A NEEP is a Nash equilibrium $p$ such that if $p^0=p$, then almost surely $p^t=p$ for all $t$.

Existence is not guaranteed, for instance in the Matching Pennies game  ${(1,-1) \; (-1, 1)} \choose {(-1,1) \; (1, -1)}$, there is no NEEP. Notice that in the Chicken game ${(0,0) \; (7, 2)} \choose {(2,7) \; (6, 6)}$, the NEEP are the 2 pure Nash equilibria whereas the unique Evolutionary Stable Strategy corresponds to the fully mixed Nash.

\begin{theorem} \label{thm3} For  $p^0$ in $\Int(\Delta)$, any possible limit of the stochastic process $(p^t)_t$ is a Nash Equilibrium with Equalizing Payoffs.  Formally, 

  $$\P_{p^0}( \exists p\in \Delta\backslash {\cal P}, p^t \xrightarrow[t \to \infty]{}p)=0.$$
\end{theorem}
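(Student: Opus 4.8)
The plan is to kill the event $C=\{\exists\, p\in\Delta\setminus{\cal P},\ p^t\to p\}$ by decomposing $\Delta\setminus{\cal P}$ according to the two ways a limit can fail to be a NEEP: either the limit is not a Nash equilibrium, or it is a Nash equilibrium that violates the equalizing condition. Both sub-events will be treated through the exact log-ratio formula coming from \eqref{eq1}: for every player $i$ and all $a_i,b_i\in A_i$,
$$\log\frac{p_i^t(a_i)}{p_i^t(b_i)}=\log\frac{p_i^0(a_i)}{p_i^0(b_i)}+\eta_i\, S_i^t(a_i,b_i),\qquad S_i^t(a_i,b_i):=\sum_{k=0}^{t-1}\big(u_i(a_i,a_{-i}^k)-u_i(b_i,a_{-i}^k)\big),$$
which is finite and well defined since $p^0\in\Int(\Delta)$.

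First, the Nash part. Applying Lemma \ref{LevyExt} to the event that the opponents of $i$ play $a_{-i}$, and averaging, on $\{p^t\to p\}$ the empirical frequency of each $a_{-i}$ converges a.s. to $p_{-i}(a_{-i})$, whence $\tfrac1t S_i^t(a_i,b_i)\to u_i(a_i,p_{-i})-u_i(b_i,p_{-i})$. Fix $a_i\in\supp(p_i)$. If $b_i\in\supp(p_i)$, the left-hand side of the log-ratio formula converges to a finite limit, so $S_i^t(a_i,b_i)$ stays bounded, the Ces\`aro limit is $0$, and $u_i(a_i,p_{-i})=u_i(b_i,p_{-i})$. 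If $b_i\notin\supp(p_i)$, the log-ratio tends to $+\infty$, so $S_i^t(a_i,b_i)\to+\infty$, which forces the Ces\`aro limit to be $\geq 0$, i.e. $u_i(b_i,p_{-i})\leq u_i(a_i,p_{-i})$. These are exactly the conditions for $p$ to be a Nash equilibrium, so $\{p^t\to p,\ p\ \text{not Nash}\}$ is null.

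The core of the proof is the equalizing part. Fix a tuple $(i_0,a_{i_0},b_{i_0},\hat a_{-i_0})$ with $u_{i_0}(a_{i_0},\hat a_{-i_0})\neq u_{i_0}(b_{i_0},\hat a_{-i_0})$, and consider the event $E$ that $p^t\to p$ with $a_{i_0},b_{i_0}\in\supp(p_{i_0})$ and $\hat a_{-i_0}\in\supp(p_{-i_0})$. On $E$ the log-ratio converges to a finite limit, so $S_{i_0}^t(a_{i_0},b_{i_0})$ converges, and therefore its increments $g(a_{-i_0}^t):=u_{i_0}(a_{i_0},a_{-i_0}^t)-u_{i_0}(b_{i_0},a_{-i_0}^t)$ tend to $0$. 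Since $g$ takes finitely many values and its nonzero values are bounded away from $0$, this forces $g(a_{-i_0}^t)=0$ for all large $t$, hence $a_{-i_0}^t\neq\hat a_{-i_0}$ eventually (because $g(\hat a_{-i_0})\neq 0$). On the other hand $\hat a_{-i_0}\in\supp(p_{-i_0})$ gives $p_{-i_0}^t(\hat a_{-i_0})\to p_{-i_0}(\hat a_{-i_0})>0$, so Lemma \ref{LevyExt} applied to the event $\{a_{-i_0}^k=\hat a_{-i_0}\}$ shows $\hat a_{-i_0}$ is played with positive asymptotic frequency, in particular infinitely often --- a contradiction. Thus $\P_{p^0}(E)=0$, and since there are finitely many such tuples and they cover $\{p^t\to p,\ p\ \text{not equalizing}\}$, this event is null as well.

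Combining the two parts, $C$ is contained in a finite union of null events, so $\P_{p^0}(C)=0$. I expect the main obstacle to be precisely the equalizing step: the convergence of $S_{i_0}^t$ at first sight only yields an averaged (in-expectation, i.e. Nash) condition, and the decisive observation is that finite convergence of the partial sums upgrades this to the vanishing of each individual increment, which is exactly the pointwise almost-sure equality demanded by the NEEP definition. Lemma \ref{LevyExt} is what guarantees that the opponents genuinely play each of their support actions often enough to produce the contradiction, thereby closing the argument.
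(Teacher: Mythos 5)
Your proof is correct and follows essentially the same route as the paper's: the Nash part (Ces\`aro averages of the payoff sums via Lemma \ref{LevyExt}, plus boundedness of ratios of support coordinates) is identical, and your equalizing step is the paper's argument in logarithmic form. Where the paper uses the one-step formula \eqref{eq2} to show that the update factor tends to $1$ and then passes to the limit along the infinitely many stages at which a support profile $\hat a_{-i}$ is played, you equivalently observe via \eqref{eq1} that the partial sums $S_i^t$ converge, so their increments --- the per-stage payoff differences --- must vanish, contradicting the recurrence of $\hat a_{-i}$ guaranteed by Lemma \ref{LevyExt}; both versions rest on exactly the same two ingredients, so this is a reformulation rather than a different proof.
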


\vspace{0.5cm}

\noindent{\it Proof of Theorem \ref{thm3}.} Let $\Omega=\{\omega=(p^0,a^0,p^1,a^1,...,)\}$ be the set of sequences in $\Delta\times A$. The initial condition $p^0$ induces a probability $\P_{p^0}$ on $\Omega$, endowed with the $\sigma$-algebra ${\cal F}_\infty$, and we have to prove that the probability that $p^t(\omega)$ converges to a limit $p(\omega)$ in $\Delta\backslash {\cal P}$ is 0.

{By Lemma \ref{LevyExt}, the exist $\tilde \Omega \subset \Omega$ such that $\P_{p^0}(\tilde \Omega)=1$ and for each $B\subset A$, \eqref{eqLevyExt} holds.
Consider $\omega$ in $\tilde\Omega$} such that $(p^t(\omega))_t=(p^t)_t$ converges to a limit $p=p(\omega)$ in $\Delta$. Fix a   player $i$ in $N$, $a_i$ and $b_i$ in $A_i$  such that $p_i(a_i)>0$.  \begin{eqnarray*}
\frac{  p^t_{i}(b_i)}{  p^t_i(a_i)}&=&\frac{  p^0_{i}(b_i)}{  p^0_{i}(a_i)}\exp\left({\eta_i\sum_{k=0}^{t-1}( u_i(b_i,a_{-i}^k)- u_i(a_i,a_{-i}^k))}\right), \\
& =& \frac{  p^0_{i}(b_i)}{  p^0_{i}(a_i)}\exp\left({t \eta_i\frac{1}{t}\sum_{k=0}^{t-1}( u_i(b_i,a_{-i}^k)- u_i(a_i,a_{-i}^k))}\right).\end{eqnarray*}
 The sequence $(\frac{  p^t_{i}(b_i)}{  p^t_i(a_i)})_t$ is bounded, so \begin{equation}\label{eq5}\limsup_{t \to \infty} \frac{1}{t}\sum_{k=0}^{t-1}( u_i(b_i,a_{-i}^k)- u_i(a_i,a_{-i}^k))\leq 0.\end{equation}
 Consider any $b_{-i}\in A_{-i}$. By Lemma \ref{LevyExt}, $\frac{1}{t}\sum_{k=0}^{t-1} \1_{a_{-i}^k=b_{-i}}\xrightarrow[t\to \infty]{}p_{-i}(b_{-i})$ and therefore 
\begin{align*}
\lim_{t\to\infty}\frac{1}{t}\sum_{k=0}^{t-1} u_i(b_i,a_{-i}^k)&=\lim_{t\to\infty}  \frac{1}{t}\sum_{k=0}^{t-1}\sum_{b_{-i}\in A_{-i}}  \1_{a_{-i}^k=b_{-i}}u_i(b_i,b_{-i}) \\
&=\sum_{b_{-i}\in A_{-i}} u_i(b_i,p_{-i}(b_{-i}))\\
& = u_i(b_i,p_{-i}).
\end{align*} 
By inequality (\ref{eq5}), we get $u_i(b_i,p_{-i})\leq u_i(a_i,p_{-i})$ for all $b_i, a_i$ in $A_i$ such that $p_i(a_i)>0$, that is $p$ is a Nash equilibrium.

Fix now  $i\in  N$ and consider any  $a_i$ such that $p_i(a_i)>0$. 
We have for all $t$:
\[p_{i}^{t+1}(a_i)=  \frac{p^t_{i}(a_i)e^{\eta_i u_i(a_i,a_{-i}^t)}}{\sum\limits_{b_i\in A_i}  p^t_{i}(b_i)e^{\eta_i u_i(b_i,a_{-i}^t)}}.\]
Since  $p_{i}^{t+1}(a_i)$ and $p^t_{i}(a_i)$ converge to the same positive limit $p_i(a_i)$, it follows that
\[\lim_{t\to\infty}\frac{e^{\eta_i u_i(a_i,a_{-i}^{t})}}{\sum\limits_{b_i\in A_i}  p^t_{i}(b_i)e^{\eta_i u_i(b_i,a_{-i}^{t})}}= 1. \]
Now, fix any  $a_{-i}\in A_{-i}$ such that $p_{-i}(a_{-i})>0$. Recall that by Lemma \ref{LevyExt},   the set  $\{t:a_{-i}^t=a_{-i}\}$ is almost surely  infinite. Therefore, by considering an appropriate  subsequence (which depends on $\omega$), we have
\[ \frac{e^{\eta_i u_i(a_i,a_{-i})}}{\sum\limits_{b_i\in A_i}p_{i}(b_i)e^{\eta_i u_i(b_i,a_{-i})}}= 1.\]
We conclude that the quantity 
\[ e^{\eta_i u_i(a_i,a_{-i})}=\sum_{b_i\in A_i}p_{i}(b_i)e^{\eta_i u_i(b_i,a_{-i})}\]
does not depend on the choice of $a_i$ such that $p_i(a_i)>0$, and therefore that $p$ is a NEEP.
\hfill $\Box$

\vspace{0.5cm}

 Theorem \ref{thm3} implies that the MWA dynamics can not converge in games without NEEP, such as Matching Pennies. But  convergence may fail even if the game admits a NEEP, as shown by our first example \ref{ex1111} where payoffs are given by 
 \[
    \bordermatrix{
     & L & R  \cr
    T & (1,1) & (1, 0) \cr
    B & (1,0) & (1,1)
    }.\]
Here the NEEP  are the elements of $E_1 \cup E_2$, with $E_1=\{(xT+(1-x)B,R), x\in [0,1/2]\}$, $E_2=\{(xT+(1-x)B,L), x\in [1/2,1]\}$. \\
    
Assume now that the  convergence of $(p^t)_t$ holds almost surely, what are the possible limits? Theorem \ref{thm3} implies that $\lim_t p^t$ has to be a NEEP, but the following example shows that not any NEEP may be a limit of $(p^t)_t$.

\begin{example}\label{exa18} Consider the game  $$\bordermatrix{
     & L & R  \cr
    T & (0,0) & (0, 0) \cr
    B & (0,0) & (-1,-1)
    }.$$
Here any Nash Equilibrium is Payoff Equalizing and 
\[{\cal P}=\{(p_1,p_2), p_1(T)=1\; {\rm{or}}\; p_2(L)=1\}.\]
Among these NEEP, each element has {maximal support in ${\cal P}$ (i.e. no element of ${\cal P}$ has a strictly larger support)} except $(T,L)$, $(B,L)$ and $(T,R)$ which are pure but not strict Nash Equilibria. 

Fix $p^0\in \Int(\Delta)$, we have for each $t\geq 0$:
\[p_1^t(T)=\frac{p_1^0(T)}{p_1^0(T)+p_1^0(B) e^{-\eta_1R_t}} \; {\rm and}\; p_2^t(L)=\frac{p_2^0(L)}{p_2^0(L)+p_2^0(R) e^{-\eta_2B_t}},\]
where $R_t:=\sum_{s=0}^{t-1}\1_{a_2^s=R}$ is the number of $R$ played by Player 2 before stage $t$, and $B_t=\sum_{s=0}^{t-1}\1_{a_1^s=B}$ is the number of $B$ played by Player 1 before stage $t$. The sequences $(R_t)$ and $(B_t)$ are non decreasing, so   the sequences $(p_1^t(T))$ and $(p_2^t(L))$ are non decreasing and converge in $[0,1]$. Since $p^0\in \Int(\Delta)$, it is obviously not possible to converge to $(B,L)$ or $(T,R)$. It is less obvious and more interesting to see that the limit cannot be $(T,L)$.
\end{example}

\begin{lemma} {In Example \ref{exa18}, $(p^t)_t$ converges almost surely to a NEEP {\it with maximal support}}.
\end{lemma}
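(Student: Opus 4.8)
The plan is to first reduce the statement to the exclusion of a single limit point, and then to exploit the mutual-inhibition structure of the dynamics to rule that point out.

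\emph{Reduction.} Since $R_t=\sum_{s=0}^{t-1}\1_{a_2^s=R}$ and $B_t=\sum_{s=0}^{t-1}\1_{a_1^s=B}$ are non-decreasing, the explicit formulas of Example \ref{exa18} show that $p_1^t(T)$ and $p_2^t(L)$ are non-decreasing, hence converge, so $p^t$ converges $\mathbb{P}_{p^0}$-almost surely to some $p=(p_1,p_2)$. Moreover $p_1(T)\ge p_1^0(T)>0$ and $p_2(L)\ge p_2^0(L)>0$, which already rules out the limits $(B,L)$, $(T,R)$ and $(B,R)$. By Theorem \ref{thm3} the limit is a NEEP, i.e. $p_1(T)=1$ or $p_2(L)=1$; writing $R_\infty=\lim_t R_t$ and $B_\infty=\lim_t B_t$ in $\mathbb{N}\cup\{\infty\}$, this means $R_\infty=\infty$ or $B_\infty=\infty$. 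The only NEEP with positive coordinates that fails to have maximal support is $(T,L)$, which corresponds exactly to $R_\infty=\infty$ \emph{and} $B_\infty=\infty$. Thus it suffices to prove
\[\mathbb{P}_{p^0}(R_\infty=\infty,\ B_\infty=\infty)=0.\]

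\emph{Why soft arguments do not suffice.} Applying Lévy's extension (Lemma \ref{LevyExt}) to $\1_{a_2^t=R}$ and $\1_{a_1^t=B}$, whose conditional probabilities are comparable to $e^{-\eta_2 B_t}$ and $e^{-\eta_1 R_t}$, gives $R_\infty=\infty\iff\sum_t e^{-\eta_2 B_t}=\infty$ and $B_\infty=\infty\iff\sum_t e^{-\eta_1 R_t}=\infty$. For non-decreasing sequences these two divergences are perfectly compatible (take $R_t\approx B_t\approx\eta^{-1}\log t$), so the conclusion cannot follow from summability alone; one must use the self-consistent coupling between the two counters.

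\emph{Main argument.} Write $c_1,c_2>0$ for the initial odds ratios $p_1^0(B)/p_1^0(T)$ and $p_2^0(R)/p_2^0(L)$. During the $j$-th $R$-epoch (the stages with $R_t=j$), Player 1 plays $B$ with the constant probability $\beta_j\asymp c_1 e^{-\eta_1 j}$, Player 2 plays $R$ with probability $\rho_{B_t}\asymp c_2 e^{-\eta_2 B_t}$, and the epoch ends at the first $R$. Let $b_j$ be the value of $B_t$ entering epoch $j$, let $G_j$ be the number of $B$'s played during it, so $b_{j+1}=b_j+G_j$, and set $x_j=\eta_1 j-\eta_2 b_j$; let $\mathcal{G}_j$ be the history up to the start of epoch $j$. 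Two heuristics drive the proof: (i) the conditional expected number of $B$'s in a completed epoch is $\asymp\beta_j/\rho_{b_j}\asymp e^{-x_j}$, giving the drift $\mathbb{E}[x_{j+1}-x_j\mid\mathcal{G}_j]\approx\eta_1-\eta_2 c\,e^{-x_j}$ for a constant $c>0$, which has a \emph{unique and unstable} fixed point $x^\ast=\log(\eta_2 c/\eta_1)$, so $x_j$ is pushed away from $x^\ast$ towards $+\infty$ or $-\infty$; and (ii) the probability that a given epoch never ends (Player 2 plays $R$ no more) is $\asymp\exp(-K e^{x_j})$, since under no further $R$ the count $B_t$ drifts up and $\sum_s\rho_{B_s}\asymp e^{x_j}$. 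On $\{R_\infty=\infty\}$ every epoch ends, so by (ii) and a conditional Borel–Cantelli argument $x_j\not\to-\infty$; hence by (i) $x_j\to+\infty$ with the drift bounded below by $\eta_1/2$ eventually, so that $B_\infty=b_0+\sum_j G_j$ with $\mathbb{E}[G_j\mid\mathcal{G}_j]\asymp e^{-x_j}$ summable, giving $B_\infty<\infty$. This contradicts $B_\infty=\infty$, so $\{R_\infty=\infty,\ B_\infty=\infty\}$ is null and the lemma follows, the roles of the two players being symmetric.

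\emph{Main obstacle.} The hard part will be to turn heuristics (i)–(ii) into rigorous statements about the genuinely coupled, self-consistent recursion $b_{j+1}=b_j+G_j$: within an epoch, each extra $B$ lowers $\rho$ and thereby lengthens the epoch, so the within-epoch drift of $B$ and the epoch length feed back on each other, and one must control the fluctuations of $G_j$ (not merely its mean) jointly with the event of surviving \emph{every} epoch. Concretely, the crux is to prove that the $x_j$-recursion is transient away from its unstable fixed point and that survival of all epochs is incompatible with $x_j\to-\infty$. This instability of the ``diagonal'' is precisely what the associated mean dynamics makes transparent through its conserved quantity $(1+c_1 e^{-\eta_1 R})=K\,(1+c_2 e^{-\eta_2 B})^{\eta_1/\eta_2}$, whose only trajectory reaching the corner $(T,L)$ is a measure-zero stable manifold.
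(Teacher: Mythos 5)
Your reduction is correct and coincides with the paper's: monotonicity of $p_1^t(T)$ and $p_2^t(L)$ gives almost sure convergence, Theorem \ref{thm3} together with the positivity of the limits of these two coordinates leaves only $(T,L)$ to exclude, and convergence to $(T,L)$ is exactly the event $\{R_\infty=\infty,\ B_\infty=\infty\}$. From there, however, what you offer is a plan rather than a proof: the whole burden rests on heuristics (i) and (ii), and you yourself flag that making them rigorous is ``the main obstacle.'' Concretely, three steps are missing and none is routine. First, (i) controls only the conditional mean of $G_j$, while your conclusion $\sum_j G_j<\infty$ requires control of the conditional law of $G_j$, which is itself a coupled within-epoch process (each extra $B$ lowers $\rho$ and prolongs the epoch --- the feedback you name but do not tame). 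Second, and most seriously, the step ``$x_j\not\to-\infty$, hence $x_j\to+\infty$ with drift eventually at least $\eta_1/2$'' presupposes a dichotomy (no oscillation around, and no convergence to, the unstable fixed point $x^*$) for a recursion with state-dependent, geometric-type increments; non-convergence to unstable equilibria is a theorem that needs noise non-degeneracy hypotheses and a genuine argument (\`a la Pemantle), which you do not supply. Third, the conditional Borel--Cantelli step in (ii) needs care, since the events ``epoch $j$ never ends'' are mutually exclusive and nested in time, not of the form handled by Lemma \ref{LevyExt}.

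The idea that makes all of this unnecessary --- and it is the paper's actual route --- is to use the monotonicity a second time: the set $E=\{p\in\Delta:\ p_1(T)\geq p_1^0(T),\ p_2(L)\geq p_2^0(L)\}$ is forward-invariant, and on $E$ the ``locking'' event
$H_\infty=\{\exists t,\ \forall s\geq t,\ a_s^1=T\}\cup\{\exists t,\ \forall s\geq t,\ a_s^2=L\}$
has probability bounded below, uniformly over $E$, by the explicit convergent infinite product
$\prod_{t\geq 0}\left(1+\tfrac{1}{\min\{x,y\}}e^{-\min\{\eta_1,\eta_2\}t}\right)^{-1}>0$
with $x=p_1^0(T)$, $y=p_2^0(L)$; this is your quantity $\exp(-Ke^{x_j})$ made exact, computed via the time-changed chain $(B'_s,R'_s)$. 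The stopping-time/restart argument of Lemma \ref{lem25} (if locking has not occurred by a large deterministic time, restart from $p^\tau\in E$ and collect a fresh uniform chance $D$) then forces $\mathbb{P}_{p^0}(H_\infty)=1$, i.e.\ one of the counters $B_t,R_t$ is almost surely eventually constant, which excludes the limit $(T,L)$. No drift analysis, no fixed-point dichotomy, and no control of the fluctuations of $G_j$ are needed. As written, your proposal identifies the right event to eliminate but leaves its elimination at the level of heuristics, so it does not establish the lemma.
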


\begin{Proof} 
{From the preceding discussion, $(p_t)$ converges almost surely to some limit $p_\infty$. With probability $1$, $p_\infty$ belongs to ${\cal P}$ (by Theorem \ref{thm3}) and cannot be $(T,R)$ or $(B,L)$. It remains to show that $p_\infty$ cannot be equal to $(T,L)$ with positive probability.} 

Fix  $p^0$ in $\Int(\Delta)$ and write $x=p_1^0(T)$, $y=p_2^0(L)$. 

{
We will show that $\P_{p^0}(H_\infty)=1$ where 
\[ H_\infty= (\exists t \geq 0, \forall s \geq t, \; a^1_s=T) \cup (\exists t \geq 0, \forall s \geq t, \; a^2_s=L),\]
implying that almost surely either $(p^1_t)_t$ or $(p^2_t)_t$ is eventually constant and therefore that $p_\infty$ has maximal support in ${\cal P}$. }

{
Define for all $t\geq 0$:
\[ H_t= ( \forall s \geq t, \; a^1_s=T) \cup (\forall s \geq t,\; a^2_s=L),\]
so that $H_t \subset H_{t+1}$ and $H_\infty= \cup_{t\geq 0}H_t$.
}

The process $(Z_t)_t:= (B_t,R_t)_t$ is a Markov chain with values in $\N^2$ and transitions such that for each $t\geq 0$, $B_{t+1}$ and $R_{t+1}$ are {conditionnally} independent given $Z_t$,  and:
\[\P(B_{t+1}=B_t| Z_t)=\frac{x}{x+(1-x)e^{-\eta_1 R_t}}\; , \; \P(B_{t+1}=1+B_t| Z_t)=1- \P(B_{t+1}=B_t| Z_t),\]
\[\P(R_{t+1}=R_t| Z_t)=\frac{y}{y+(1-y)e^{-\eta_2 B_t}}\; , \; \P(R_{t+1}=1+R_t| Z_t)=1- \P(R_{t+1}=R_t| Z_t).\]
The sequences $(B_t)_t$ and $(R_t)_t$ are weakly increasing and thus converge respectively  in $\N\cup\{\infty\}$ to $B_\infty$ and $R_\infty$. Plainly, $B_\infty+R_\infty=+\infty$ { almost surely, as the probability that the chain never moves after reaching any position $Z_t$ is zero.}

We make a change of time variable and only consider the stages  where $B_t+R_t$ strictly increases. Define $t_0=0$ and for each $s\geq 1$, $t_s=\min\{t>t_{s-1}, B_t+R_t>B_{t-1}+R_{t-1}\}.$ Write for each $s\geq 0$, $B'_s=B_{t_s}$ and $R'_s=R_{t_s}$. The process $(Z'_s)_s:= (B'_s,R'_s)_s$ is well-defined with probability $1$, and is a Markov chain with values in $\N^2$ and transitions given by:
\begin{eqnarray*} \P(B'_{s+1}=1+B'_s, R'_{s+1}=R'_s| Z'_s)&=&\frac{\frac{(1-x)e^{-\eta_1 R'_s}}{x+(1-x)e^{-\eta_1 R'_s}}\frac{y}{y+(1-y)e^{-\eta_2 B'_s}}}{1-\frac{xy}{(x+(1-x)e^{-\eta_1 R'_s})(y+(1-y)e^{-\eta_2 B'_s})}},\\
&=& \frac{y(1-x) e^{-\eta_1R'_s}}{y(1-x)e^{-\eta_1R'_s}+x(1-y)e^{-\eta_2B'_S}+(1-x)(1-y)e^{-\eta_1R'_S-\eta_2B'_S}},
\end{eqnarray*}
\[\P(B'_{s+1}= B'_s, R'_{s+1}=1+R'_s| Z'_s)= \frac{x(1-y) e^{-\eta_2B'_s}}{y(1-x)e^{-\eta_1R'_s}+x(1-y)e^{-\eta_2B'_S}+(1-x)(1-y)e^{-\eta_1R'_S-\eta_2B'_S}},\]
\[\P(B'_{s+1}= 1+B'_s, R'_{s+1}=1+R'_s| Z'_s)= \frac{(1-x)(1-y)e^{-\eta_1R'_S-\eta_2B'_S}}{y(1-x)e^{-\eta_1R'_s}+x(1-y)e^{-\eta_2B'_S}+(1-x)(1-y)e^{-\eta_1R'_S-\eta_2B'_S}}.\]
We have
\begin{eqnarray*}
\P(Z'_{s+1}=Z'_s+(1,0)| Z'_s)  & =&   \frac{y(1-x) e^{-\eta_1R'_s}}{y(1-x)e^{-\eta_1R'_s}+x(1-y)e^{-\eta_2B'_s}+(1-x)(1-y)e^{-\eta_1R'_s-\eta_2B'_s}},\\
& = &  \frac{1}{1+\frac{(1-y)}{y}e^{-\eta_2B'_s}(1+\frac{x}{1-x}e^{\eta_1R'_s})}
 \end{eqnarray*}

{
Consequently, using the Markov property
 \begin{eqnarray*}
\P_{p^0}(\forall t \geq 0, a^2_t=L) &= &\P_{p^0}(\forall t\geq 0, Z'_{t+1}= Z'_t+(1,0)) \\
& = & \prod_{t\geq 0} \frac{1}{1+\frac{(1-y)}{y(1-x)}e^{-\eta_2 t}}.
    \end{eqnarray*}
Note that this infinite product is positive,  non-decreasing with respect to $y$ and non-increasing with respect to $x$. 
Similarly, we have
 \begin{eqnarray*}
\P_{p^0}(\forall t \geq 0, a^1_t=T) &= &\P_{p^0}(\forall t\geq 0, Z'_{t+1}= Z'_t+(0,1)) \\
& = & \prod_{t\geq 0} \frac{1}{1+\frac{(1-x)}{x(1-y)}e^{-\eta_1 t}}.
\end{eqnarray*}
We deduce that
 \begin{eqnarray*}
\P_{p^0}(H_\infty) &\geq &\max\{\P_{p^0}(\forall t \geq 0, a^1_t=T),\P_{p^0}(\forall t \geq 0, a^2_t=L)\} \\
 & \geq & \max\left\{\prod_{t\geq 0} \frac{1}{1+\frac{(1-y)}{y(1-x)}e^{-\eta_2 t}},\prod_{t\geq 0} \frac{1}{1+\frac{(1-x)}{x(1-y)}e^{-\eta_1 t}} \right\}\\
& \geq & \prod_{t\geq 0} \frac{1}{1+\frac{1}{\min\{x,y\}}e^{-\min\{\eta_1,\eta_2\} t}}>0.
\end{eqnarray*}
We can now conclude as in Lemma \ref{lem25}. Define $E=\{ p \in \Delta \,:\,p_1(T)\geq x, p_2(L)\geq y\}$ and  
\[ D= \inf_{p \in E}\P_{p}(H_{\infty}) .\]
Note that for every $p \in E$, we have 
\[\P_{p}(H_{\infty}) \geq \prod_{t\geq 0} \frac{1}{1+\frac{1}{\min\{p_1(T),p_2(L)\}}e^{-\min\{\eta_1,\eta_2\} t}} \geq \prod_{t\geq 0} \frac{1}{1+\frac{1}{\min\{x,y\}}e^{-\min\{\eta_1,\eta_2\} t}} >0.\]
so that $D>0$. Let $p \in E$. Since $\P_{p}(H_\infty)=\lim_{n\to \infty} \P_{p}(H_n)$, there exists $n_0$ such that  $\P_{p}(H_{n_0})\geq D/2$. Define $\tau :=\inf\{ t\geq n_0: B_t>B_{n_0}\text{ and } R_t>R_{n_0}  \}$, so that $H^{c}_{n_0}=\{\tau < \infty\}$.
We have
\begin{align}
    \P_{p}(H_{\infty})&=\P_{p}(H_{n_0})\P_{p}(H_{\infty}| H_{n_0})+\P_{p}(\tau<\infty)\P_{p}(H_{\infty}|\tau < \infty) \nonumber \\
    &=\P_{p}(H_{n_0})+\E_{p}( \ind_{\tau<\infty}\E_{p}(\ind_{H_{\infty}}|\mathcal{F}_{\tau})) \nonumber \\
    &=\P_{p}(H_{n_0})+\E_{p}( \ind_{\tau<\infty}\P_{p^{\tau}}(H_{\infty})) \nonumber \\
    &\geq \P_{p}(H_{n_0})+(1-\P_{p}(H_{n_0}))D \nonumber\\
    &=D +\P_{p}(H_{n_0})(1-D)\nonumber \\
    &\geq D+\frac{D}{2}(1-D), \nonumber
\end{align}
As it is true for all $p\in E$, we obtain $D\geq D+\frac{D}{2}(1-D)$, which together with the fact that $D\in(0,1]$ implies that $D=1$.}
\hfill $\Box$
\end{Proof}

{From the computations made during the proof, it is not difficult to see that any NEEP with maximal support is the limit of $(p^t)$ with positive probability for a well chosen initial condition $p^0$ in $\Int(\Delta)$, using that the probability to always play $T$ (or $R$) is positive.}

\vspace{0,5cm}
This example raises the following question: is it possible to have with positive probability $p^t\xrightarrow[t\to \infty]{} p$, where $p$ is not a NEEP with maximal support ? In particular, can we have convergence to a pure Nash equilibrium which is not strict ?

\section{Coordination Games}
\label{sec:coord}
Recall the game in Example \ref{exa1}:  
\[
    \bordermatrix{
     & L & R  \cr
    T & (1,1)^{\r{\bullet}} & (0, 0) \cr
    B & (0,0) & (1,1)^{\r{\bullet}}
    }.
\]
Here we have 2 NEEP which are strict Nash Equilibria,  and we showed  that almost surely the EW process converges to one of them.  Consider the extension to 3 actions per player, that is the game $H$ given by:
 \[  \bordermatrix{
     & l & m&r  \cr
    T & (1,1)^{\r{\bullet}} & (0, 0)& (0,0) \cr
    M & (0,0) & (1,1)^{\r{\bullet}}& (0,0) \cr
    B &(0,0) & (0,0)& (1,1)^{\r{\bullet}}
    }.
\]
An obvious conjecture is that here the play will almost  surely converge to one of the 3 strict NE. Applying corollary \ref{cor1}, we get that almost  surely either the play will  converge to one of the 3 strict NE, or $d(p^t,Z)\xrightarrow[t\to \infty]{}0$, where 
\[Z=\{p\in \Delta, p((T,l))=p((M,m))=p((B,r))=0\}.\]
$Z$ is here a connected set, union of 6 segments: 
\begin{align*}
 Z = &[(T,m),(T,r)] \cup [(T,r),(M,r)] 
 \cup [(M,r),(M,l)]   \\
 &\cup [(M,l),(B,l)]  \cup [(B,l),(B,m)]  \cup [(B,m),(T,m).
\end{align*}
To prove the conjecture we need to show that   $d(p^t,Z)\xrightarrow[t\to \infty]{}0$ cannot happen with positive probability, and this is surprisingly not so simple. Formally, it will be a consequence of the following Theorem \ref{thm4}.
 
 \begin{definition} An $n$-player finite strategic game is called a {\it strong coordination game} if all players have the same    set of pure strategies, and for each player $i$ and action profile $a$, $u_i(a)>0$ if all the players play the same action in $a$, and $u_i(a)=0$ otherwise. \end{definition}
 
In a  strong coordination game, the NEEP coincide with the strict Nash equilibria, they are the elements of the diagonal of $A$.  

\begin{theorem} \label{thm4}
In a strong coordination game, almost surely the EW process converges to one of the strict Nash equilibria.
\end{theorem}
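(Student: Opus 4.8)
The plan is to reduce the statement, via Corollary \ref{cor1}, to showing that the bad event $\{d(p^t,Z)\to 0\}$ has probability zero, where $Z=\{p\in\Delta:\ p(\SNE)=0\}$. Indeed, Corollary \ref{cor1} already gives that almost surely either some strict Nash equilibrium $a$ is played from some stage on — in which case $p^t\to a$ by Lemma \ref{lem1}c) — or $d(p^t,Z)\to 0$, so the theorem follows once the latter is excluded. It is worth recording the geometry of $Z$: since $p(\SNE)=\sum_{k}\prod_{i\in N}p_i(k)$ with nonnegative factors, $p\in Z$ means that for every common action $k$ at least one player assigns probability $0$ to $k$; for two players this is exactly the set of profiles with \emph{disjoint supports}, and in any case $Z$ contains no strict Nash equilibrium.

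The engine of the argument is the explicit form (\ref{eq1}) of the dynamics. In a strong coordination game $u_i(k,a_{-i})$ is positive only when all players other than $i$ play $k$, so for two players $p^t_i(k)\propto w^0_i(k)\exp\!\big(\eta_i c_{i,k}N^t_{-i}(k)\big)$, where $c_{i,k}=u_i(k,\dots,k)>0$ and $N^t_{-i}(k)$ counts the stages before $t$ at which the opponent played $k$. Hence player $i$ puts asymptotically all its mass on the action(s) maximizing $c_{i,k}\,\overline N^t_{-i}(k)$, with $\overline N^t_{-i}(k)=N^t_{-i}(k)/t$ the opponent's empirical frequency of $k$; in particular each player is continually pulled toward the actions its opponent actually plays. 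This is precisely the repelling mechanism already exploited in Example \ref{exa1}, now to be run near every face of $Z$.

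To turn this into a contradiction on $\{d(p^t,Z)\to 0\}$, I would argue by contradiction and use Lemma \ref{LevyExt} to replace the realized counts $\overline N^t_{-i}(k)$ by the time averages $\tfrac1t\sum_{s<t}p^s_{-i}(k)$ of the opponent's mixed actions. On this event the supports of $p^t_1$ and $p^t_2$ become asymptotically disjoint, yet each opponent's empirical play is concentrated on its own support; since the actions on which a player's weights grow fastest are those its opponent plays most frequently, the two supports are forced to overlap in the limit. Made quantitative, this amounts to showing that a suitable log-ratio $\log\big(p^t_i(k)/p^t_i(\ell)\big)=\log\tfrac{w^0_i(k)}{w^0_i(\ell)}+t\,\eta_i\big(c_{i,k}\overline N^t_{-i}(k)-c_{i,\ell}\overline N^t_{-i}(\ell)\big)$ diverges to $+\infty$ linearly, which is incompatible with staying near $Z$ — exactly the step that produced the contradiction $\tfrac1t\sum_{s<t}(\ind_{a_2^s=R}-\ind_{a_2^s=L})\to 1$ in Example \ref{exa1}.

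The delicate point, and the reason the result is not so simple, is that $Z$ is connected: in the $3\times 3$ game it is a cycle of six segments, so the process could a priori drift indefinitely \emph{along} $Z$ without ever converging, each player perpetually chasing the other around the cycle. Excluding this perpetual motion is the main obstacle. I expect to handle it either by a Lyapunov/submartingale argument on an overlap functional such as $\sum_k\sqrt{p^t_1(k)\,p^t_2(k)}$, which vanishes exactly on $Z$ and whose drift should be shown to be bounded away from $0$ near $Z$, or by the renewal and change-of-time device of Example \ref{exa18}: decompose the trajectory according to which face of $Z$ it is near, show each face is transient for the induced chain because the lead of some coordinate grows at a linear rate dominating the drift along $Z$, and conclude that almost surely the trajectory leaves a neighborhood of $Z$, which by Corollary \ref{cor1} forces convergence to a strict Nash equilibrium. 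A secondary difficulty is that, the trajectory being non-stationary, one must control the gap between the instantaneous profile $p^t$ and its time average before Lemma \ref{LevyExt} can be applied cleanly.
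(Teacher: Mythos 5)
Your reduction via Corollary \ref{cor1} is sound and matches the paper's overall architecture, and you have correctly located the real difficulty: excluding the event $\{d(p^t,Z)\to 0\}$ when $Z$ is connected, so that the process could in principle drift along $Z$ forever. But at exactly that point your proposal stops being a proof and becomes a conjecture: you write that you ``expect to handle it'' either by a submartingale argument on $\sum_k\sqrt{p^t_1(k)p^t_2(k)}$ ``whose drift should be shown to be bounded away from $0$ near $Z$,'' or by a face-by-face transience argument, and you carry out neither. That missing step is essentially the entire mathematical content of the paper's proof. The paper takes the potential $Z_t=\min_{k}\bigl(\prod_{i=1}^n p^t_i(k)\bigr)^{-1}$, which blows up as $p^t$ approaches $Z$, and proves (Lemma \ref{LemmaCoordination}) via a long explicit expansion of $\E(Z_{t+1}\mid\mathcal{F}_t)$ — enumerating action profiles by how many players coordinate, and finishing with an AM--GM inequality — that $(Z_t)$ is a supermartingale on $\{Z_t\geq M_0\}$; a stopping-time argument then gives $\P(Z_t\to\infty)=0$, and Theorem \ref{thm1} concludes. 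Nothing in your sketch substitutes for that computation.

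The gap is not a routine verification that one may wave through, for two reasons visible in the paper itself. First, such drift inequalities are fragile with respect to the choice of potential: the paper's Remark in Section \ref{sec:coord} shows that the natural candidates (probability of playing a strict NE, common expected payoff) fail to be submartingales even for small learning rates, and Example \ref{exa7} shows that for a mild perturbation of the $3\times 3$ coordination game the analogue of Lemma \ref{LemmaCoordination} for the unmodified potential is false and a hand-tailored potential is required. So your overlap functional would need an actual drift computation, and it might well fail. Second, your ``repelling mechanism'' paragraph (log-ratios diverging linearly, as in Example \ref{exa1}) is genuinely a two-player argument: with $n\geq 3$ players, a player's payoff for action $k$ is positive only when \emph{all} others play $k$, so the pairwise ``chasing'' picture and the disjoint-support geometry you describe no longer drive the contradiction; the paper handles $n=2$ by Example \ref{exa1} precisely because the general argument it gives is designed for $n\geq 3$. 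In short: right skeleton, right identification of the obstacle, but the theorem's proof — the supermartingale inequality for a workable potential — is absent.
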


\noindent{\it Proof of Theorem \ref{thm4}.} Denote by  $S=\{1,...,m\}$ the set of pure strategies of each player $i=1,...,n$.  If $n=2$ the proof follows from the proof of Example \ref{exa1}, so in the sequel we assume $n\geq 3$.  Fix $p^0\in \Int(\Delta)$. 

Define for each $k$ in $S$, $Z_t(k)= \frac{1}{\prod_{i=1}^n p^t_i(k)}\geq 1$ and  $$Z_t=\min_{k\in S}  Z_t(k)=\frac{1}{\max_{k\in S}\; \prod_{i=1}^n p^t_i(k) }\in [1,+\infty].$$
We will show that  almost surely there exists $k$ in $S$ and $t_0$ such that for all $t\geq t_0$, $a^t=(k,...,k)$. Notice that (by Lemma \ref{lem1}) this is equivalent to showing that $Z_t\xrightarrow[t\to \infty]{}1$ almost surely.

The quantity   $\min_{k\in S} \frac{1}{\prod_{i=1}^n p_i(k)} $ will here play    the role of a potential.  The  following lemma states that $(Z_t)_t$ is a super-martingale for large values of the potential. 
  
\begin{lemma} \label{LemmaCoordination} For $M_0$ large enough (only depending on $m$, $n$ and the payoffs of the game), 
$$\forall t\geq 0, \; \;\;\; \ind_{Z_t\geq M_0}\E(Z_{t+1} | \mathcal{F}_t )\leq Z_t\ind_{Z_t\geq M_0}.
$$
\end{lemma}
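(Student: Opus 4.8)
The plan is to reduce the statement about $Z_{t+1}=\min_k Z_{t+1}(k)$ to the behaviour of a single, favorably chosen coordinate. Let $k^{*}$ be an $\mathcal F_t$-measurable minimizer of $k\mapsto Z_t(k)$, so that $P_{k^{*}}:=\prod_i p_i^t(k^{*})=1/Z_t$ is the largest of the full-coordination probabilities $P_\ell:=\prod_i p_i^t(\ell)$. Since $Z_{t+1}\le Z_{t+1}(k^{*})$ pointwise, it suffices to prove $\E(Z_{t+1}(k^{*})\mid\mathcal F_t)\le Z_t$ on $\{Z_t\ge M_0\}$. First I would record the one-step update: in a strong coordination game player $i$'s weights move at stage $t$ only when the opponents $a_{-i}^t$ are unanimous on some $\ell$, in which case only the weight on $\ell$ is multiplied by $e^{\eta_i v_i(\ell)}$, where $v_i(\ell)=u_i(\ell,\dots,\ell)>0$. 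Tracking the effect on $w_i^t/w_i^t(k^{*})$ then gives $Z_{t+1}(k^{*})=Z_t\prod_i F_i$, where $F_i=1$ if $a_{-i}^t$ is not unanimous, $F_i=\tilde F_i(k^{*}):=e^{-\eta_i v_i(k^{*})}+p_i^t(k^{*})(1-e^{-\eta_i v_i(k^{*})})<1$ if the opponents coordinate on $k^{*}$ (favorable), and $F_i=G_i(\ell):=1+p_i^t(\ell)(e^{\eta_i v_i(\ell)}-1)>1$ if they coordinate on some $\ell\neq k^{*}$ (unfavorable). Everything thus reduces to showing $\E(\prod_i F_i\mid\mathcal F_t)\le 1$ for $Z_t$ large.

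The decisive structural observation, and the only place $n\ge 3$ is used, is that at most one action can be played by $n-1$ or more players at once (two distinct such actions would require $2(n-1)>n$). Hence the action profiles split cleanly into three disjoint classes: those with no $(n-1)$-agreement, on which $\prod_i F_i=1$; the events $A_{i_0,\ell}$ where the opponents of exactly one player $i_0$ are unanimous on $\ell$, on which $\prod_i F_i=\tilde F_{i_0}(\ell)$ reduces to a single factor; and the full-unanimity events $B_\ell$. This decomposition converts $\E(\prod_i F_i\mid\mathcal F_t)-1$ into a sum of one-factor contributions and so sidesteps the correlations between different players' updates, which is the feature that would otherwise make $\E(\prod_i F_i)$ intractable.

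It then remains to weigh the favorable against the unfavorable contributions. Writing $q_i(\ell):=\prod_{j\neq i}p_j^t(\ell)$, one has $\P(A_{i_0,\ell})=(1-p_{i_0}^t(\ell))\,q_{i_0}(\ell)$, and the cancellation $q_{i_0}(\ell)\,p_{i_0}^t(\ell)=P_\ell$ is what controls everything. For $\ell\neq k^{*}$ it bounds the total unfavorable contribution (from the $A_{i_0,\ell}$ and $B_\ell$ terms) by a constant times $\sum_{\ell\neq k^{*}}P_\ell\le(m-1)P_{k^{*}}$, i.e. by $O(P_{k^{*}})$, the constant depending only on $m$, $n$ and the payoffs through $C:=\max_{i,k}(e^{\eta_i v_i(k)}-1)$. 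For $\ell=k^{*}$ the same cancellation, together with $\tilde F_i(k^{*})-1=-(1-p_i^t(k^{*}))(1-e^{-\eta_i v_i(k^{*})})$, lower-bounds the magnitude of the favorable contribution from the events $A_{i_0,k^{*}}$ by $c\,P_{k^{*}}\sum_i (1-p_i^t(k^{*}))^2/p_i^t(k^{*})$, with $c:=\min_{i,k}(1-e^{-\eta_i v_i(k)})>0$.

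The gain comes from the constraint $\prod_i p_i^t(k^{*})=1/Z_t$: it forces $\min_i p_i^t(k^{*})\le Z_t^{-1/n}$, so the sum $\sum_i(1-p_i^t(k^{*}))^2/p_i^t(k^{*})$ is at least of order $Z_t^{1/n}$ for $Z_t$ large. Thus the favorable contribution is $\gtrsim P_{k^{*}}Z_t^{1/n}$ while the unfavorable one is only $\lesssim P_{k^{*}}$, and choosing $M_0=\max\{2^n,(4C'n(m-1)/c)^n\}$ (with $C'$ the payoff constant above) makes the favorable term dominate on $\{Z_t\ge M_0\}$, giving $\E(\prod_i F_i\mid\mathcal F_t)\le 1$ and hence the supermartingale inequality; the bound on $M_0$ depends only on $m$, $n$ and the payoffs, as required. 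I expect this balancing to be the main obstacle: both contributions scale like $P_{k^{*}}$ in the raw probabilities, and the statement is true only because the favorable term carries the extra blow-up factor $1/\min_i p_i^t(k^{*})$, which must be lower-bounded precisely by exploiting that the maximal coordination probability $P_{k^{*}}$ is small in the regime $Z_t\ge M_0$.
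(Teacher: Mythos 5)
Your proposal is correct and follows essentially the same route as the paper's proof: the same reduction to the minimizing coordinate $k^{*}$ (the paper takes $k^{*}=1$ without loss of generality), the same classification of action profiles (no $(n-1)$-agreement, a single deviator facing unanimous opponents, full unanimity) exploiting $n\geq 3$, and the same balancing of an unfavorable contribution of order $1/Z_t$ against a favorable contribution of order $Z_t^{1/n-1}$. The only cosmetic difference is the final step: you extract the $Z_t^{1/n}$ gain by pigeonhole, using $\min_i p_i^t(k^{*})\leq Z_t^{-1/n}$, whereas the paper applies the arithmetic–geometric mean inequality to $\sum_{i}\prod_{j\neq i}p_j^t(k^{*})$; both yield the same bound and the same kind of threshold $M_0$.
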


\begin{proof} Fix $t\geq 0$ and assume 
without loss of generality that $Z_t=Z_t(1)$.
 
We have 
$\E(Z_{t+1} | \mathcal{F}_t )\leq \E(Z_{t+1}(1)| \mathcal{F}_t ).$ By definition of the EW process,  for each player $i$:
$$\frac{1}{p_i^{t+1}(1)}= \frac{1}{p_i^{t}(1)} \sum_{k \in S} p_i^t(k) e^{\eta_i(u_i(k,a_{-i}^t) -u_i(1,a_{-i}^t))}.$$
\noindent so $$\E(Z_{t+1}(1)| \mathcal{F}_t )=\left. \E\left(\frac{1}{\prod\limits_{i=1}^n p^{t+1}_i(1)}\right| \mathcal{F}_t\right)= \frac{X_t}{\prod\limits_{i=1}^n p^{t}_i(1)},$$
with $$X_t=\sum_{\hat{k}=(k_1,...,k_n)\in S^n} X_t(\hat{k})\prod_{i=1}^n p_i^t(k_i),$$
and for all $\hat{k}$,  $$X_t(\hat{k})= \prod_{i \in N} \sum_{k\in S} p_i^t(k)e^{\eta_i(u_i(k,\hat{k}_{-i}) -u_i(1,\hat{k}_{-i}))}.$$
For every $k\in S$ denote $\alpha_i(k)=e^{\eta_i u_i(k,...,k)}-1>0$. 

We first compute $X_t(\hat{k})$ for each $\hat{k}$ in $S^n$. 
\begin{itemize}
\item if $\hat{k}=(1,...,1)$, $X_t(\hat{k})=\prod_{i \in N}\left(p_i^t(1)+(1-p_i^t(1)) e^{-\eta_iu_i(1,...,1)}\right) =\prod_{i \in N} \frac{1+p_i^t(1)\alpha_i(1)}{1+\alpha_i(1)}$,
\item if $\hat{k}=(k,...,k)$ with $k\neq 1$, 
\[X_t(\hat{k})=\prod_{i \in N}\left((1-p_i^t(k))+p_i^t(k) e^{\eta_i u_i(k,...,k)}\right)=\prod_{i \in N} (1+p_i^t(k)\alpha_i(k)),\]
\item if $\hat{k}= (1_{-i},k)$ for some player $i$ and action $k\neq 1$, $X_t(\hat{k})= \frac{1+p_i^t(1)\alpha_i(1)}{1+\alpha_i(1)}$,
\item if $\hat{k}=(k_{-i},k')$ for some player $i$, $k'\in S$ and    $k\neq 1$, $k\neq k'$,   $X_t(\hat{k})= 1+p_i^t(k) \alpha_i(k))$,
\item otherwise, in $\hat{k}$ there is no action played by all or by $n-1$ players, so that for every player $i$ the set $\{k_j, j\neq i\}$ has at least two elements, and then $X_t(\hat{k})=1$.
\end{itemize}

Denote by $\gamma$ the probability under $p^t$ that  the action profile played is such that there is no action played simultaneously by all or by $n-1$ players. We obtain: 
\begin{align*}
X_t =\gamma &+ \left( \prod_{i\in N} p_i^t(1)\right) \left(  \prod_{i \in N} \frac{(1+p_i^t(1)\alpha_i(1))}{1+\alpha_i(1)}\right)   \\
&+ \sum_{k=2}^m \left( \prod_{i\in N} p_i^t(k)\right) \left(  \prod_{i \in N} (1+p_i^t(k)\alpha_i(k))\right) \\
  &+ \sum_{k=2}^m \sum_{i \in N} \left(\prod_{j\neq i} p_j^t(1)\right) p_i^t(k)  \frac{(1+p_i^t(1)\alpha_i(1))}{1+\alpha_i(1)}  \\
 & +  \sum_{k=2}^m \sum_{i \in N} \left(\prod_{j\neq i} p_j^t(k)\right)  (1-p_i^t(k))(1+p_i^t(k) \alpha_i(k)).
\end{align*}
Using that
\begin{align*}
1=\gamma &+  \left( \prod_{i\in N} p_i^t(1)\right) \\
&+ \sum_{k=2}^m \left( \prod_{i\in N} p_i^t(k)\right) \\
&+ \sum_{k=2}^m \sum_{i \in N} \left(\prod_{j\neq i} p_j^t(1)\right)p_i^t(k) \\
&+ \sum_{k=2}^m \sum_{i \in N} \left(\prod_{j\neq i} p_j^t(k)\right)(1-p_i^t(k)),
\end{align*} 
we get:
\begin{align*}
X_t -1 =  & \left( \prod_{i\in N} p_i^t(1)\right) \left( -1+ \prod_{i \in N} \frac{(1+p_i^t(1)\alpha_i(1))}{1+\alpha_i(1)}\right)  \\
& + \sum_{k=2}^m \left( \prod_{i\in N} p_i^t(k)\right) \left( -1+ \prod_{i \in N} (1+p_i^t(k)\alpha_i(k))\right) \\
 &+  \sum_{k=2}^m \sum_{i \in N} \left(\prod_{j\neq i} p_j^t(1)\right) p_i^t(k)  \frac{\alpha_i(1)( p_i^t(1)-1)}{1+\alpha_i(1)}   \\
&+  \sum_{k=2}^m \sum_{i \in N} \left(\prod_{j\neq i} p_j^t(k)\right)  (1-p_i^t(k))p_i^t(k) \alpha_i(k) \\
=& \left( \prod_{i\in N} p_i^t(1)\right) \left( -1+ \prod_{i \in N} \frac{(1+p_i^t(1)\alpha_i(1))}{1+\alpha_i(1)}\right)  \\
& + \sum_{k=2}^m \left( \prod_{i\in N} p_i^t(k)\right) \left( -1+ \prod_{i \in N} (1+p_i^t(k)\alpha_i(k))\right) \\
 &+  \sum_{k=2}^m \sum_{i \in N} \left(\prod_{j \in N} p_j^t(1)\right) p_i^t(k)  \frac{\alpha_i(1)}{1+\alpha_i(1)}-\sum_{k=2}^m \sum_{i \in N} \left(\prod_{j\neq i} p_j^t(1)\right) p_i^t(k)  \frac{\alpha_i(1)}{1+\alpha_i(1)}   \\
&+  \sum_{k=2}^m \sum_{i \in N} \left(\prod_{j\in N} p_j^t(k)\right)  (1-p_i^t(k))\alpha_i(k).
\end{align*}
We know that $\prod_{i\in N} p_i^t(1)= \frac{1}{Z_t}$ and $\prod_{i\in N} p_i^t(k)\leq \frac{1}{Z_t}$ for each $k$. So using that
\[ -1+ \prod_{i \in N} (1+p_i^t(k)\alpha_i(k)) \geq 0,\]
we get:
\begin{align*}
X_t -1 \leq   \frac{1}{Z_t}    \Bigg\{ &-1 + \prod_{i \in N} \frac{(1+p_i^t(1)\alpha_i(1))}{1+\alpha_i(1)}   \\
 &-(m-1)  + \sum_{k=2}^m    \prod_{i \in N} (1+p_i^t(k)\alpha_i(k))   \\
 & +\sum_{k=2}^m \sum_{i \in N}  p_i^t(k)  \frac{\alpha_i(1)}{1+\alpha_i(1)} \\
 & + \sum_{k=2}^m \sum_{i \in N}    (1-p_i^t(k))  \alpha_i(k) \Bigg\} -  \sum_{k=2}^m \sum_{i \in N} \left(\prod_{j\neq i} p_j^t(1)\right) p_i^t(k)  \frac{\alpha_i(1)}{1+\alpha_i(1)}.
\end{align*}
The last term of the above expression can be written as:
\begin{align*} 
-\sum_{k=2}^m \sum_{i \in N} \left(\prod_{j\neq i} p_j^t(1)\right) p_i^t(k)  \frac{\alpha_i(1)}{1+\alpha_i(1)} &= - \sum_{i \in N} \left(\prod_{j\neq i} p_j^t(1)\right) (1-p_i^t(1))  \frac{\alpha_i(1)}{1+\alpha_i(1)} \\
&= \frac{1}{Z_t} \Bigg\{ \sum_{i \in N}  \frac{\alpha_i(1)}{1+\alpha_i(1)}\Bigg\}- \sum_{i \in N} \left(\prod_{j\neq i} p_j^t(1)\right)  \frac{\alpha_i(1)}{1+\alpha_i(1)} .
\end{align*}
Finally, 
\[X_t -1 \leq \frac{C}{Z_t}-D\sum_{i=1}^n\prod_{j\neq i} p^t_j(1),\]
with 
\[C= -m+ 1+\sum_{k=2}^m\prod_{i\in N}(1+\alpha_i(k))+\sum_{i=1}^n  \frac{\alpha_i(1)}{1+\alpha_i(1)}+\sum_{k=2}^m \sum_{i \in N}  \alpha_i(k)+ \sum_{i=1}^n  \frac{\alpha_i(1)}{1+\alpha_i(1)}>0\] 
\[D=\min_{i\in N} \frac{\alpha_i(1)}{1+\alpha_i(1)}>0.\]
Now, using the arithmetic and geometric means  inequality to the numbers $(\prod_{j \neq i} p_j^t(1))_{i \in N}$ we obtain:
\[\frac1n \sum_{i=1}^n\prod_{j\neq i} p^t_j(1)\geq   \left(\prod_{j=1}^n p^t_j(1)\right)^{\frac{n-1}{n}}= \left(\frac{1}{Z_t}\right)^{\frac{n-1}{n}}.\]
Thus
\[X_t-1\leq \frac{C}{Z_t}-D n \left(\frac{1}{Z_t}\right)^{\frac{n-1}{n}}, \;{\rm and}\; \; Z_t(X_t-1)\leq C-Dn{Z_t}^{1/n}.\]
Hence, for  $M_0$ large enough we have that  $(Z_t>M_0)$ implies  $(X_t\leq 1)$, and this  concludes the proof  of  Lemma \ref{LemmaCoordination}.
\end{proof}
We  now finish the proof of Theorem \ref{thm4}. By Lemma \ref{LemmaCoordination}, there exists $M_0$ that depends only on the structure of the game such that 
 \begin{equation} \label{Martingale}
\ind_{Z_t\geq M_0} \E[Z_{t+1} | \mathcal{F}_t ]\leq Z_t\ind_{Z_t\geq M_0}.
 \end{equation}
Define  $\tau=\inf\{t\geq 0,  Z_t<M_0\}$ and $\hat{Z}_t=Z_{t\wedge \tau}$. The event $(t<\tau)$ is  $ \mathcal{F}_t$-measurable and implies   $\hat{Z}_t=Z_t\geq M_0$ and $\hat{Z}_{t+1}=Z_{t+1}$; so using \eqref{Martingale}
$$\E(\1_{t<\tau} \hat{Z}_{t+1}| \mathcal{F}_t )= \E(\1_{t<\tau}{Z}_{t+1}| \mathcal{F}_t ) =\1_{t<\tau}\E( {Z}_{t+1}| \mathcal{F}_t )\leq \1_{t<\tau}Z_t=\1_{t<\tau}\hat{Z}_t.$$
And if $t\geq \tau$ then $\hat{Z}_t=Z_{\tau}=\hat{Z}_{t+1}$, so
$\E(1_{t\geq \tau} \hat{Z}_{t+1}| \mathcal{F}_t)=\E(1_{t\geq \tau}\hat{Z}_{t}| \mathcal{F}_t )=1_{t\geq \tau}\hat{Z}_{t}.$
We obtain that $(\hat{Z}_{t})_{t\geq 0}$ is a  supermartingale with values in $\R_+$.  So it converges almost surely to a limit $\hat{Z}_{\infty}<\infty$.\\

If $\tau=\infty$ then $\hat{Z}_{t}=Z_t$ converges almost surely to $\hat{Z}_{\infty}<\infty$, i.e. $Z_t$ doesn't converge to $+\infty$. Hence
$\Prob\left(\tau=\infty, \lim\limits_{t\to\infty}Z_t=\infty\right)=0,$
or equivalently:
$$\Prob\left(\forall t \ Z_t\geq M_0 \mbox{ and } \lim\limits_{t\to\infty}Z_t=\infty \right)=0.$$
\noindent Thus,  
\begin{align*}\Prob_{p^0}\left(\lim\limits_{t\to\infty}Z_t =\infty\right)&=& 
\Prob\left(\exists t^{*} \ s.t. \ \forall t\geq t^{*}  \ Z_t\geq M_0 \ \mbox{ and } \lim\limits_{t\to\infty}Z_t=\infty \right)\\
&\leq& \sum_{t^{*}=0}^{\infty}\Prob\left(  \forall t\geq t^{*} \ Z_t\geq M_0 \ \mbox{ and } \lim\limits_{t\to\infty}Z_t=\infty \right)=0,
\end{align*}
where the last equality is obtained using the Markov property.

Notice that if $p^t(SNE)\xrightarrow[t\to \infty]{}0$, that is if the probability of playing  a strict NE at stage $t$ converges to 0, then $Z_t \xrightarrow[t\to \infty]{}\infty$. Since  $\Prob_{p^0}\left(\lim\limits_{t\to\infty}Z_t =\infty\right)=0$, we get $\Prob_{p^0}\left(p^t(SNE)\xrightarrow[t\to \infty]{}0 \right)=0$. So almost surely there exists $k$ such that $(p^t(k,...,k))_t$ does not converge to 0, and by Theorem \ref{thm1}, there exists $t_0$ such that for all $t\geq t_0$, $a^t=(k,...,k)$. This concludes the proof of Theorem \ref{thm4}. \hfill $\Box$

\vspace{1cm}

Although  strong coordination  games are not necessarily   common payoff games,  it is tempting to try to extend  the proof of Theorem \ref{thm4} to  games with common payoffs, by considering an appropriate potential function as in Lemma \ref{LemmaCoordination}.  We conclude this section with  a few comments about this approach. 

\begin{example} \label{exa7}
Consider   the 2-player game:
\[    
\bordermatrix{
 & l & m &  r \cr
T & (1,1)^{\r{\bullet}} & (0, 0) & (0,0) \cr
M & (0,0) & (1,1)^{\r{\bullet}}  & (0,0) \cr
B & (0,0) & (\frac{1}{2},\frac{1}{2})  & (1,1)^{\r{\bullet}}
}
\]
This is a close variant of the strong coordination game  
\[  \bordermatrix{
     & l & m&r  \cr
    T & (1,1)^{\r{\bullet}} & (0, 0)& (0,0) \cr
    M & (0,0) & (1,1)^{\r{\bullet}}& (0,0) \cr
    B &(0,0) & (0,0)& (1,1)^{\r{\bullet}}
    },\]
hence one may try the same approach by considering 
\[Z_t=\min_{k\in S}\frac{1}{\prod_{i=1}^n p^t_i(k)}\in [1,+\infty].\]
However here the analog of Lemma \ref{LemmaCoordination} does not hold: one can show that for every $M_0>0$ there exists $p_0 \in$  $\Int(\Delta)$ such that $Z_0\geq M_0$ and $\E_{p^0}(Z_1)>Z_0$. 
    
    For this game we can prove almost sure convergence to a strict NE by introducing 
    $$Z'_t=\min\left\{\frac{1}{x^t_1y_1^t}, \frac{1}{y_2^t(x_2^t+x_3^t)}, \frac{1}{x_3^t(y_2^t+y_3^t)}\right\}.$$
Hence, we can show that  if $M_0$ is large enough, for all $t>0$ we have:
\[\ind_{Z'_t\geq M_0}\E[Z'_{t+1}|\mathcal{F}_t]\leq Z'_t\ind_{Z'_t\geq M_0}.\]
See the complete proof of convergence for  example \ref{exa7} in the Appendix. 
\end{example}

\begin{remark}In order to mimic the proof of Theorem \ref{thm4} in common payoff games, one possible idea is to consider natural potential functions of these games.
In this sense, two natural candidates are:  1) the probability to play a strict NE, and 2) the common expected payoff.

Consider again  the game of Example   \ref{exa1}:  
\[
\bordermatrix{
     & L & R  \cr
    T & (1,1)^{\r{\bullet}} & (0, 0) \cr
    B & (0,0) & (1,1)^{\r{\bullet}}
    }.
\]
Here the two candidates coincide, and we denote by 
\[u(p)=xy+(1-x)(1-y)\]
the common expected payoff, or the probability to play a strict NE,  when
\[ p=(xT+(1-x)B,yL+(1-y)R)\]
is played.  It would be nice if $(u(p^t))_t$ was a sub-martingale, at least\footnote{It is easy to  construct  examples where the current expected payoff is not a submartingale for large $\eta_1$ and  $\eta_2$.}for small $\eta_1$ and $\eta_2$. However the following computations show that it is not the case.

For $x$, $y$ in $[0,1]$,  define
\[f_{a,b}(x,y)=\E_{p^0}(u(p^1)) -u(p^0)\]
for $p^0=(xT+(1-x)B,yL+(1-y)R)$. Computations show that
\begin{align*}
  f_{a,b}(x,y) D = & -u(1+ax)(1+by)(1+a(1-x))(1+b(1-y))\\
   &+ xy (u+xy(a+b+ab))(1+a(1-x))(1+b(1-y)) \\
   & + x(1-y)(u+a(1-x)(1-y)+bxy)(1+ax)(1+b(1-y))\\
   &+(1-x)y(u+axy+b(1-x)(1-y))(1+by)(1+a(1-x)) \\
   &+ (1-x)(1-y)(u+(1-x)(1-y)(a+b+ab))(1+ax)(1+by),
\end{align*}
with $u=u(x,y)$,   $a=e^{\eta_1}-1>0$, $b=e^{\eta_2}-1>0$, and 
\[D=(1+ax)(1+by)(1+a(1-x))(1+b(1-y))>0.\] 
Notice that $f_{0,0}(x,y)=0$ for all $(x,y)$, and $f_{a,b}(1/2,1/2)=0$ for all $(a,b)$. If both $x$ and $y$ are not in $\{0,1/2,1\}$  one can show that there exists $\alpha>0$ such that  for $a$, $b$ $\in (0,\alpha)$ we have $f_{a,b}(x,y)>0$.
   
   However take $a$ and $b$ very close to 0, with $a<<b$, and define $x=\frac{1}{2} -\frac{b}{16}$ and $y=1/4$. One can check that in this case,   $f(x,y)  \sim -\frac{3b^3}{2048}<0$. 

\end{remark}

\section{Simulations}
\label{sec:simulations}
 
Consider once again the game of Example   \ref{exa1}:  
\[ \bordermatrix{
     & L & R  \cr
    T & (1,1)^{\r{\bullet}} & (0, 0) \cr
    B & (0,0) & (1,1)^{\r{\bullet}}
    }.\] 
We know that for any initial probability $p^0\in \Int(\Delta)$, almost surely the play converges to $(T,L)$ or to $(B,R)$. But how does the probability of reaching $(T,L)$ depend on $p^0$ ?\\ 

Let us define for any $(x,y)\in [0,1]^2$,   the probability $f(x,y)$  to converge to $(T,L)$ starting from  $p^0=(xT+(1-x)B,yL+(1-y)R).$  Writing   $a=e^{\eta_1}-1$ and  $b=e^{\eta_2}-1$, $f$ can be characterized as  the unique  function from $[0,1]^2$ to $\R$ satisfying the following  three conditions:
\begin{enumerate}
\item $ \forall (x,y)\in [0,1]^2,$
\begin{align*}
f(x,y)& =
  xy f\left(\frac{x(1+a)}{1+ax}, \frac{y(1+b)}{1+by}\right) + x(1-y)  f\left(\frac{x}{1+a(1-x)}, \frac{y(1+b)}{1+by}\right)\\
  & +   (1-x)y  f\left(\frac{x(1+a)}{1+ax}, \frac{y}{1+b(1-y)}\right) \\
 &+(1-x)(1-y) f\left(\frac{x }{1+a(1-x)},\frac{y}{1+b(1-y)}\right),
 \end{align*}
\item $f$ is continuous on $[0,1]^2\backslash\{(1,0),(0,1)\}$, 
\item $f$ satisfies the boundary conditions $f(0,0)=f(0,1)=f(1,0)=0$   and $f(1,1)=1$.
\end{enumerate}

However  obtaining an explicit formula for $f$ is probably out of scope, as there are classical examples in dimension one quite similar to our problem for which no explicit formula is known \cite{Karlin}.

If $a=b$, we have by symmetry $f(x,y)=f(y,x)=1-f((1-x),(1-y))$, so that $f(x,1-x)=1/2$ for all $x$ in $(0,1)$.   In order to  have a glimpse of the graph of $f$ , we have considered a grid 11x11 of initial distributions starting from zero with mesh 1/10, and for each of these distributions, we run the EW $500$ times with $\eta_1=\eta_2=0.1$, stopping when the distance between $p^t$ and a strict  Nash equilibrium was  less than $0.0001$.

The  following matrix, where the rows represent $p^0(T)$, and the columns represent $p^0(L)$, shows  the proportion of times the algorithm converged to $(T,L)$. 
\[\bordermatrix{
     &  0 & 0.1 & 0.2 & 0.3 & 0.4 & 0.5 & 0.6 & 0.7 & 0.8 & 0.9 & 1 \cr
 0   &  0 & 0   & 0   &  0  &  0  &  0  &  0  &  0  &  0  &  0  &0 \cr
 0.1 &  0 & 0 & 0 & 0 & 0 & 0.002 & 0.012 & 0.04 & 0.138 & 0.514 & 1 \cr
 0.2 &  0 & 0 & 0 & 0 & 0.008 & 0.022 & 0.102 & 0.242 & 0.512 & 0.844 & 1\cr
 0.3 &  0 & 0 & 0.002 & 0.01 & 0.052 & 0.128 & 0.264 & 0.478 & 0.748 & 0.97 & 1 \cr  
 0.4 &  0 & 0 & 0.008 & 0.04 & 0.178 & 0.3 & 0.5 & 0.702 & 0.904 & 0.99  & 1 \cr
 0.5 &  0 & 0.002 & 0.024 & 0.094 & 0.27 & 0.494 & 0.666 & 0.874 & 0.98 & 0.996 & 1 \cr 
 0.6 &  0 & 0.004 & 0.116 & 0.27 & 0.52 & 0.702 & 0.842 & 0.934 & 0.998 & 1 & 1 \cr 
 0.7 &  0 & 0.044 & 0.232 & 0.504 & 0.708 & 0.88 & 0.974 & 0.99 & 1 & 1 & 1 \cr 
 0.8 &  0 & 0.14 & 0.538 & 0.736 & 0.884 & 0.964 & 0.988 & 1 & 1 & 1 & 1 \cr
 0.9 &  0 & 0.504 & 0.884 & 0.97 & 0.992 & 1 & 1 & 1 & 1 & 1 & 1 \cr
 1   & 0 & 1 & 1 & 1 & 1 & 1 & 1 & 1 & 1 & 1 & 1 &
}
\]

We observe  that the values under the antidiagonal are all greater than $0.5$ while the values above are all smaller than 0.5.  This  perfectly make sense since  $f(x,y)=1/2$ whenever $x+y=1$, however we can also observe that $f(x,y)$ is not a function of the sum $x+y$.
 
\section{Open questions}
\label{sec:remarks}

We have shown that the Markov chain generated by the EW algorithm with constant learning rates converges almost surely to the set of strict Nash equilibria in the class of strict coordination games.
Outside of this class, we know that the only possible limit points are the Nash Equilibria with Equalizing Payoff and that  whenever a strict Nash equilibrium exists, then the probability to play this strict Nash equilibrium converges almost surely to 0 or 1. These results raise several interesting open questions and research directions for future works. 

\begin{enumerate}

\item What are the possible limits of the EW process?  We have shown in  Theorem \ref{thm3}   that any limit has to be a  Nash Equilibrium with Equalizing Payoff. However, in example \ref{exa18} only NEEP with maximal support can be obtained as a limit of the process. Does this property generalize to any game ? In particular, can a pure, but not strict, Nash equilibrium be a limit of a EW process ?

\item We have shown that the almost sure convergence of the EW process is not guaranteed. For instance, it does not hold for zero-sum games like Matching Pennies, which do not have a Payoff Equalizing Nash Equilibria. As a consequence existence is not granted for generic games. An interesting case is the one of common payoffs games: does EW always converge for games with common payoffs ? We have checked this convergence holds  for  common payoff games with 2 players and 2 actions for each player, but we do not know if the convergence extends to any number of actions and players. 

\item Computing the probability to converge to a given NEEP is difficult, it would be nice to have a non trivial example where an explicit formula can be obtained. More generally, in the class of strict coordination games, our results imply that there exists a random time $T$ such that the players always play one of the strict Nash equilibria after stage $T$. Is it possible to have any quantitative estimate to bound $T$? Is it true that $T$ is integrable?   \end{enumerate}

\section*{Acknowledgements}
We thank Panayotis Mertikopoulos for useful comments. This research has benefited from the financial support of the AI   Institute ANITI, which is funded by the French ``Investing for the Future - PIA3'' program under the Grant
agreement ANR-19-PI3A-0004, and of the ANR Chess (Programme d'Investissement
d'Avenir ANR-17-EURE-0010).   J. Renault also  acknowledges funding from the ANR MaS-DOL.

\bibliographystyle{plain}  
\bibliography{bibliografiaa}  

\newpage
\section{Convergence in Example \ref{exa7}}
In this appendix, we will provide further details on the proof of convergence in example \ref{exa7}:
\[    
\bordermatrix{
 & l & m &  r \cr
T & (1,1)^{\r{\bullet}} & (0, 0) & (0,0) \cr
M & (0,0) & (1,1)^{\r{\bullet}}  & (0,0) \cr
B & (0,0) & (\frac{1}{2},\frac{1}{2})  & (1,1)^{\r{\bullet}}
}
\]
Write $p^t=(x^t,y^t)$. In the first part, the proof is not very different from the proof of Theorem \ref{thm4}. In fact, introducing 
    $$Z'_t=\min\left\{\frac{1}{x^t_1y_1^t}, \frac{1}{y_2^t(x_2^t+x_3^t)}, \frac{1}{x_3^t(y_2^t+y_3^t)}\right\};$$
we can show that  if $M_0$ is large enough, for all $t>0$ we have:
    $\ind_{Z'_t\geq M_0}\E[Z'_{t+1}|\mathcal{F}_t]\leq Z'_t\ind_{Z'_t\geq M_0}.$ Moreover, as in the proof of Theorem \ref{thm4} we have
    $$\P_{p^0}\left(\lim_{t\to\infty} Z'_t=\infty\right)=0.$$
Notice that if $p^t(SNE)\xrightarrow[t\to \infty]{}0$, since  $\Prob_{p^0}\left(\lim\limits_{t\to\infty}Z'_t =\infty\right)=0$, we get that $p^t=(x^t,y^t)$ converges to the set
$$D:=\{(x,y)\in\Delta: x_3=1, y_3=0\}\cup\{(x,y)\in\Delta: y_2=1, x_2=0\}.$$
However, it implies that $y^t_3\xrightarrow[t\to \infty]{}0$, then by Lemma \ref{LevyExt} it simple to show that $x^t_3\xrightarrow[t\to \infty]{}0$. Hence, the unique option is that $p^t$  converges to the pure strategy $(T,m)$ but it is in contradiction with Theorem \ref{thm3}, so $\Prob_{p^0}\left(p^t(SNE)\xrightarrow[t\to \infty]{}0 \right)=0$. It concludes the proof and shows that also in the Example \ref{exa7} the dynamics induced by the EW converge a.s. to a SNE.  

\end{document}